\DeclareMathOperator{\argmax}{arg\,max}
\newtheorem{theorem}{\textbf{Theorem}}
\newtheorem{assumption}{\textbf{Assumption}}
\newtheorem{hmm_assumption}{\textbf{C}}
\newtheorem{asynch_assumption}{A}
\newtheorem{remark}{\textbf{Remark}}
\newtheorem{lemma}{\textbf{Lemma}}
\newtheorem{definition}{\textbf{Definition}}
\newtheorem{proposition}{\textbf{Proposition}}
\newcommand*{\QEDA}{\hfill\ensuremath{\blacksquare}}%
\title{Hidden Markov Model Estimation-Based Q-learning for Partially Observable Markov Decision Process\thanks{Research supported by NSF NRI initiative \#1528036.}}
\author{Hyung-Jin Yoon, Donghwan Lee, and Naira Hovakimyan% <-this % stops a space
\thanks{Hyung-Jin Yoon and Naira Hovakimyan are with the Department of Mechanical Science and Engineering, University of Illinois at Urbana-Champaign (UIUC), Urbana, IL 61801, USA. Donghwan Lee is with the Department of Industrial and Enterprise Systems Engineering in UIUC.
        {\tt\small \{hyoon33, nhovakim, donghwan\}@illinois.edu}}
}
\begin{document}

\maketitle
\thispagestyle{empty}
\pagestyle{empty}

\begin{abstract}
The objective is to study an on-line Hidden Markov model (HMM) estimation-based Q-learning algorithm for partially observable Markov decision process (POMDP) on finite state and action sets. When the full state observation is available, Q-learning finds the optimal action-value function given the current action (Q-function). However, Q-learning can perform poorly when the full state observation is not available. In this paper, we formulate the POMDP estimation into a HMM estimation problem and propose a recursive algorithm to estimate both the POMDP parameter and Q-function concurrently. Also, we show that the POMDP estimation converges to a set of stationary points for the maximum likelihood estimate, and the Q-function estimation converges to a fixed point that satisfies the Bellman optimality equation weighted on the invariant distribution of the state belief determined by the HMM estimation process.
\end{abstract}

%%%%%%%%%%%%%%%%%%%%%%%%%%%%%%%%%%%%%%%%%%%%%%%%%%%%%%%%%%%%%%%%%%%%%%%%%%%%%%%%
\section{Introduction}
Reinforcement learning (RL) is getting significant attention due to the recent successful demonstration of the `Go game', where the RL agents outperform humans in certain tasks (video game~\cite{mnih2015human}, playing Go~\cite{silver2017mastering}). Although the demonstration shows the great potential of the RL, those game environments are confined and restrictive compared to what ordinary humans go through in their everyday life. One of the major differences between the game environment and the real-life is the presence of unknown factors, i.e. the observation of the state of the environment is incomplete. Most RL algorithms are based on the assumption that complete state observation is available, and the state transition depends on the current state and the action (Markovian assumption). Markov decision process (MDP) is a modeling framework with the Markovian assumption. Development and analysis of the standard RL algorithm are based on MDP. Applying those RL algorithms with incomplete observation may lead to poor performance. In~\cite{singh1994learning}, the authors showed that a standard policy evaluation algorithm can result in an arbitrary error due to the incomplete state observation. In fact, the RL agent in~\cite{mnih2015human} shows poor performance for the games, where inferring the hidden context is the key for winning.

Partially observable Markov decision process (POMDP) is a generalization of MDP that incorporates the incomplete state observation model.  When the model parameter of a POMDP is given, the optimal policy is determined by using dynamic programming on the belief state of MDP, which is transformed from the POMDP~\cite{lovejoy1991survey}. The belief state of MDP has continuous state space, even though the corresponding POMDP has finite state space. Hence, solving a dynamic programming problem on the belief state of MDP is computationally challenging. There exist a number of results to obtain approximate solutions to the optimal policy, when the model is given,~\cite{littman1995learning, yu2004discretized}. When the model of POMDP is not given (model-free), a choice is in the policy gradient approach without relying on Bellman's optimality. For example, Monte-Carlo policy gradient approaches~\cite{williams1992simple, bartlett2002estimation} are known to be less vulnerable to the incomplete observation, since they do not require to learn the optimal action-value function, which is defined using the state of the environment. However, the Monte-Carlo policy gradient estimate has high variance so that convergence to the optimal policy typically takes longer as compared to other RL algorithms, which utilize  Bellman's optimality principle when the full state observation is available.

A natural idea is to use a dynamic estimator of the hidden state and apply the optimality principle to the estimated state. Due to its universal approximation property, the recurrent neural networks (RNN) are used to incorporate the estimation of the hidden state in reinforcement learning. In~\cite{hausknecht2015deep}, the authors use an RNN to approximate the optimal value state function using the memory effect of the RNN. In~\cite{heess2015memory}, the authors propose an actor-critic algorithm, where RNN is used for the critic that takes the sequential data. However, the RNNs in~\cite{hausknecht2015deep, heess2015memory} are trained only based on the Bellman optimality principle, but do not consider how accurately the RNNs can estimate the state which is essential for applying  Bellman optimality principle. Without reasonable state estimation, taking an optimal decision even with given correct optimal action-value function is not possible. To the best of the authors' knowledge, most RNNs used in reinforcement learning do not consider how the RNN accurately infers the hidden state.

In this paper, we aim to develop a recursive estimation algorithm for a POMDP to estimate the parameters of the model, predict the hidden state, and also determine the optimal value state function concurrently. The idea of using a recursive state predictor (Bayesian state belief filter) in RL was investigated in~\cite{chrisman1992reinforcement, ross2008bayes, karkus2017qmdp, guo2016pac}. In~\cite{chrisman1992reinforcement}, the author proposed to use the Bayesian state belief filter for the estimation of the Q-function.  In~\cite{ross2008bayes}, the authors implemented the Bayesian state belief update with an approximation technique for the ease of computation and analyzed its convergence. More recently, the authors in~\cite{karkus2017qmdp} combine the Bayesian state belief filter and QMDP~\cite{littman1995learning}. However, the algorithms in~\cite{chrisman1992reinforcement, ross2008bayes, karkus2017qmdp} require the POMDP model parameter readily available\footnote{In \cite{ross2008bayes}, the algorithm needs full state observation for the system identification of POMDP.}.  A \emph{model-free} reinforcement learning that uses HMM formulation is presented in~\cite{guo2016pac}. The result in~\cite{guo2016pac} shares the same idea as ours, where we use HMM estimator with a fixed behavior policy, in order to disambiguate the hidden state, learn the POMDP parameters, and find optimal policy. However, the algorithm in~\cite{guo2016pac} involves multiple phases, including identification and design, which are hard to apply  online to real-time learning tasks, whereas recursive estimation is more suitable (e.g., DQN, DDPG, or Q-learning are online algorithms). The main contribution of this paper is to present and analyze a new \emph{on-line} estimation algorithm to simultaneously estimate the POMDP model parameters and corresponding optimal action-value function (Q-function), where we employ online HMM estimation techniques~\cite{krishnamurthy2002recursive, legland1997recursive}.

The remainder of the paper is organized as follows. In Section II, HMM interpretation of the POMDP with a behavior policy presented. In Section III, the proposed recursive estimation of the HMM, POMDP, and Q-function is presented and the convergence of the estimator is analyzed. In Section IV, a numerical example is presented. Section V summarizes.

%%%%%%%%%%%%%%%%%%%%%%%%%%%%%%%%%%%%%%%%%%%%%%%%%%%%%%%%%%%%%%%%%%%%%%%%%%%%%%%%
\section{A HMM: POMDP excited by Behavior policy}\label{sec: A HMM}
We consider a partially observable Markov decision process (POMDP) on finite state and action sets. A fixed behavior policy\footnote{Behavior policy is the terminology used in the reinforcement learning, and it is analogous to excitation of a plant for system identification.} excites the POMDP so that all pairs of state-action are realized infinitely often along the infinite time horizon.

\subsection{POMDP on finite state-action sets}
The POMDP $(\mathcal{S}, \mathcal{A}, T_a(s,s'), R(s,a), \mathcal{O}, O(o, s), \gamma)$ comprises: a finite state space $\mathcal{S}:=\{1, \dots, I\}$, a finite action space $\mathcal{A}:=\{1,\dots,K\}$, a state transition probability $T_a(s,s')=P(s_{n+1}=s'|s_n=s, a_n=a)$, for $s,s' \in \mathcal{S}$ and $a\in\mathcal{A}$, a reward model $R\in\mathbb{R}$ such that $R(s,a) = r(s,a) + \delta$, where $\delta$ denotes independent identically distributed (i.i.d.) Gaussian noise $\delta \sim \mathcal{N}(0, \sigma^2)$, a finite observation space $\mathcal{O}:=\{1, \dots, J\}$, an observation probability $O(o, s) = P(o_n = o|s_n=s)$, and the discount factor $\gamma \in [0,1)$. At each time step $n$, the agent first observes $o_n \in \mathcal{O}$ from the environment at the state $s_n \in \mathcal{S}$, does action $a_n\in\mathcal{A}$ on the environment and gets the reward $r_n \in \mathbb{R}$ in accordance to $R(s,a)$.

\subsection{Behavior policy and HMM}
A behavior policy is used to estimate the model parameters. Similarly to other off-policy reinforcement learning (RL) algorithms, i.e. Q-learning~\cite{watkins1992q}, a behavior policy excites the POMDP, and the estimator uses the samples generated from the controlled POMDP. The behavior policy's purpose is system identification (in other words, estimation of the POMDP parameter). We denote the behavior policy by $\mu$, which is a conditional probability, i.e. $\mu(o)=P(a|o)$. Since we choose how to excite the system, the behavior policy can be used in the estimation. The POMDP with $\mu(o)$ becomes a hidden Markov model (HMM), as illustrated in Fig.~\ref{fig:Diagram}.
\begin{figure}[thpb]
\centering
 \includegraphics[width=0.48\textwidth]{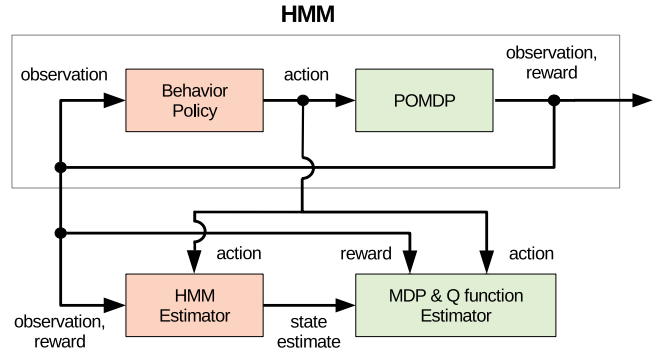}
 \caption{A POMDP Estimation Framework.}
\medskip
\label{fig:Diagram}
\end{figure}

The HMM comprises: state transition probability $P(s_{n+1}=s'|s_n=s) = P(s_{n+1}=s'|s_n=s, a_n=a; \mu, O)$ for all pairs of $(s, s')$ and the extended observation probability, i.e. $P(o, a, r|s)$ which is determined by the POMDP model parameters: $O(o, s)$, $R(s,a)$ and the behavior policy $\mu(o)$.

For the ease of notation, we define the following tensor and matrices: $\mathbf{T}\in\mathbb{R}^{K \times I \times I}$ such that $\mathbf{T}_{ijk}=P(s_{n+1}=k|s_n=j, a_n=i)$, $\mathbf{R}\in\mathbb{R}^{K \times I}$ such that $\mathbf{R}_{ij}=r(s=j, a=i)$, $\mathbf{O}\in\mathbb{R}^{I \times J}$ such that $\mathbf{O}_{ij} = P(o_n = j|s_n=i)$, and $\mathbf{P}\in\mathbb{R}^{I \times I}$ such that $\mathbf{P}_{ij}=P(s_{n+1}=j|s_n=j ; \mu )$.

The HMM estimator in Fig.~\ref{fig:Diagram} learns the model parameters $\mathbf{P}, \mathbf{O}, \mathbf{R}, \sigma$, where $\sigma$ is defined in II-A, and also provides the state estimate (or belief state) to the MDP and Q-function estimator. Given the transition of the state estimates and the action, the MDP estimator learns the transition model parameter $\mathbf{T}$. Also, the optimal action-value function $Q^*(s, a)$ is recursively estimated based on the transition of the state estimates, reward sample and the action taken.

%%%%%%%%%%%%%%%%%%%%%%%%%%%%%%%%%%%%%%%%%%%%%%%%%%%%%%%%%%%%%%%%%%%%%%%%%%%%%%%%
\section{HMM Q-Learning Algorithm For POMDPs}
The objective of this section is to present a new HMM model estimation-based Q-learning algorithm, called HMM Q-learning, for POMDPs, which is the main outcome of this paper. The pseudo code of the recursive algorithm is in Algorithm~\ref{alg: HMM Q-Learning}.
\begin{algorithm}[thpb]
%=====================================================================================
\caption{HMM Q-Learning}
\begin{algorithmic}[1]
%=====================================================================================

\State Set $n=0$.
\State Observe $o_0$ from the environment.
\State Initialize: the parameter $(\theta_0, Q_0, T_0)$, the states $(\mathbf{u}_0, \omega_0)$, $\hat{p}_n^{(\text{prev})} \in \mathcal{P}(\mathcal{S})$ as uniform distribution, randomly choose $a_n^{(\text{prev})} \in \mathcal{A}$, and set $r_n^{(\text{prev})} = 0$.

\Repeat
\State Act $a$ with $\mu(o_n) = P(a|o_n)$, get reward $r$ and the next observation $o'$ from the environment.
\State Use $y_n = (o_n, a, r)$ and $(\theta_{n}, \mathbf{u}_n, \omega_n)$ to update the estimator as follows:
\begin{equation*}
\begin{aligned}
\theta_{n+1} &= \Pi_{H} \left[ \theta_{n} + \epsilon_n \mathbf{S}\left(y_n, \mathbf{u}_n, \mathbf{\omega}_n ;\theta_n\right) \right],\\
\mathbf{u}_{n+1} &= f(y_n, \mathbf{u}_n;\theta_n),\\
\mathbf{\omega}^{(l)}_{n+1} &= \Phi(y_n, \mathbf{u}_{n};\theta_n)\mathbf{\omega}^{(l)}_n+\frac{\partial f(y_n, \mathbf{u}_{n};\theta_n)}{\partial \theta^{(l)}},
\end{aligned}
\end{equation*}
where
\begin{equation*}
\begin{aligned}
&f(y_n, \mathbf{u}_n;\theta_n)\triangleq \frac{\mathbf{P}_{\theta_n}^\top\mathbf{B}(y_n;\theta_n)\mathbf{u}_n}{\mathbf{b}^\top(y_n;\theta_n)\mathbf{u}_n},\\
&\mathbf{S}\left(y_n, \mathbf{u}_n, \mathbf{\omega}_n;\theta_n\right) =\frac{\partial  \log \left(\mathbf{b}^\top(y_n;\theta_n)\mathbf{u}_n\right)}{\partial \theta},
\end{aligned}
\end{equation*}
$\Pi_H$ denotes the projection on the convex constraint set $H\subseteq\Theta$, $\epsilon_n \geq 0$ denotes the step size, $\mathbf{\omega}_n \in \mathbb{R}^{I\times L}$ denotes the Jacobian of the state prediction vector $\mathbf{u}_n$ with respect to the parameter vector $\theta_n$.

\State Calculate $\hat{p}_n := [P(s=i|y_n, \mathbf{u}_n;\theta_n)]_{i\in\mathcal{I}}$ as in~\eqref{eq:state_est_F_n}.

\State Calculate $\hat{p}(s_{n-1},s_{n})$ with $\hat{p}_n^{(\text{prev})}$ and $\hat{p}_n$ as in~\eqref{eq:state_transition_est}.

\State Use $r_n^{\text{prev}}$, $a_n^{\text{prev}}$ and $\hat{p}(s_{n-1},s_{n})$ to update $Q_n$ according to \eqref{eq: Q_estimator_with_HMM}.

\State Use $\hat{p}(s_{n-1},s_{n})$ to update $T_n$ according to \eqref{eq: transition_estimation}.

\State $(\hat{p}_n^{(\text{prev})}, r_n^{\text{prev}}, a_n^{\text{prev}})  \leftarrow (\hat{p}_n$, r, a).

\State $o_n \leftarrow o'$.

\State $n \leftarrow n+1$.

\Until{a certain stopping criterion is satisfied.}

%=====================================================================================
\end{algorithmic}
%=====================================================================================
\label{alg: HMM Q-Learning}
\end{algorithm}
It recursively estimates the maximum likelihood estimate of the POMDP parameter and Q-function using partial observation. The recursive algorithm integrates (a) the HMM estimation, (b) MDP transition model estimation, and (c) the Q-function estimation steps. Through the remaining subsections, we prove the convergence of~Algorithm~\ref{alg: HMM Q-Learning}. To this end, we first make the following assumptions.
\begin{assumption}\label{assumption:nice_behavioral_policy}
The transition probability matrix $\mathbf{P}$ determined by the transition $\mathbf{T}$, the observation $\mathbf{O}$, and the behavior policy $\mu(o)$ are aperiodic and irreducible~\cite{norris1998markov}. Furthermore, we assume that the state-action pair visit probability is strictly positive under the behavior policy.
\end{assumption}
We additionally assume the following. 
\begin{assumption}\label{assumption:positive observation probability}
All elements in the observation probability matrix $\mathbf{O}$ are strictly positive, i.e. $\mathbf{O}_{i,j} > 0$ for all $i\in\mathcal{S}$ and $j\in\mathcal{O}$.
\end{assumption}
Under these assumptions, we will prove the following convergence result.
\begin{proposition}[Main convergence result]\label{proposition: convergence of overall algoritm}
Suppose that Assumption~\ref{assumption:nice_behavioral_policy} and Assumption~\ref{assumption:positive observation probability} hold. Then the following statements are true:

(i) The iterate $\theta_n$ in~
Algorithm~\ref{alg: HMM Q-Learning} converges almost surely to the stationary point $\theta^*$ of the conditional log-likelihood density function based on the sequence of the extended observations $\{y_i = (o_i, r_i, a_i)\}_{i=0}^n$, $l_n(\theta) = \frac{1}{n+1}\log p_n(y_0, y_1, \dots, y_n|s_0, s_1, \dots, s_n;\theta)$,
i.e., the point $\theta$ is satisfying
\begin{equation*}
E\left[\frac{\partial  \log \left(\mathbf{b}^\top(y_n;\theta)\mathbf{u}_n\right)}{\partial \theta}\right] \in N_{H}(\theta),
\end{equation*}
where $N_{H}(\theta)$ is the normal cone~\cite[pp.~343]{bertsekas1999nonlinear} of the convex set $H$ at $\theta \in H$, and the expectation $E$ is taken with respect to the invariant distribution of $y_n$ and $\mathbf{u}_n$.

(ii) Define $\bar{p}(s,s'):= \lim_{n \to \infty} \hat{p}(s_{n-1},s_n)$ in the almost sure convergence sense. Then the iterate $\{Q_n\}$ in Algorithm~\ref{alg: HMM Q-Learning}
converges in distribution to the optimal Q-function $\hat Q^*$, satisfying \begin{equation*}
\begin{aligned}
&\hat Q^*(s,a)= \sum_{s'}\bar{p}(s,s')\left(r(s,a) + \gamma \max_{a'}\hat Q^*(s',a') \right).
\end{aligned}
\end{equation*}
\end{proposition}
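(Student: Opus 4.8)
The plan is to treat the two assertions separately, since they correspond to two loosely coupled recursions that can each be cast as a stochastic approximation scheme. Part (i) concerns the recursive maximum-likelihood update $\theta_{n+1} = \Pi_H[\theta_n + \epsilon_n \mathbf{S}(y_n, \mathbf{u}_n, \omega_n; \theta_n)]$, which I would analyze with the ODE (ordinary differential equation) method for projected stochastic approximation. Part (ii) concerns the Q-function recursion, which I would analyze as a Q-learning algorithm driven by the transition estimate $\hat{p}(s_{n-1}, s_n)$ that has already stabilized by virtue of Part (i).

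For Part (i), the first step is to observe that the triple $(s_n, \mathbf{u}_n, \omega_n)$ — true state, prediction filter, and parameter-sensitivity (tangent) filter — together with the observation $y_n$ forms a Markov chain for each frozen $\theta$. The essential tool is geometric ergodicity of the optimal filter: Assumption~\ref{assumption:positive observation probability} (strictly positive observation matrix) makes the filter map $f$ a strict contraction on the simplex in the Hilbert projective metric, which yields exponential forgetting of the initial belief and, in turn, a unique invariant law for the extended chain together with an ergodic theorem for the score. I would then identify the mean field $\bar h(\theta) = E[\mathbf{S}(y, \mathbf{u}, \omega; \theta)]$ under this invariant law as the gradient $\nabla l(\theta)$ of the asymptotic log-likelihood $l(\theta) = \lim_n l_n(\theta)$, and verify the Kushner–Clark conditions: Robbins–Monro step sizes $\sum_n \epsilon_n = \infty$, $\sum_n \epsilon_n^2 < \infty$; boundedness of the iterates guaranteed by the projection $\Pi_H$; and vanishing of the averaged noise via the ergodic theorem. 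The projected mean ODE $\dot\theta = \bar h(\theta) + z$, with $z$ the minimal reflection term keeping $\theta \in H$, then has as its rest points exactly those $\theta$ with $\bar h(\theta) \in -N_H(\theta)$; since $\bar h = E[\partial \log(\mathbf{b}^\top(y;\theta)\mathbf{u})/\partial\theta]$, this is precisely the stated normal-cone stationarity condition, and almost-sure convergence to this stationary set follows.

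For Part (ii), I would first invoke Part (i) to conclude that the filter enters its stationary regime, so that the state-transition belief $\hat{p}(s_{n-1}, s_n)$ converges almost surely to the limit $\bar p(s, s')$ determined by the invariant distribution of the filter at $\theta^*$. The Q-update in \eqref{eq: Q_estimator_with_HMM} is then a Robbins–Monro iteration whose asymptotic mean field is the operator $(\mathcal{F}Q)(s,a) = \sum_{s'} \bar p(s,s')(r(s,a) + \gamma \max_{a'} Q(s',a'))$. Because $\bar p(s,\cdot)$ is a stochastic vector and $\gamma < 1$, $\mathcal{F}$ is a $\gamma$-contraction in the sup-norm and has a unique fixed point $\hat Q^*$. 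Applying the convergence theory for asynchronous stochastic approximation with a contractive mean field — the persistent-excitation half of Assumption~\ref{assumption:nice_behavioral_policy} ensuring every $(s,a)$ pair is updated infinitely often — yields convergence of $Q_n$ to $\hat Q^*$. Because the driving estimate $\hat p(s_{n-1},s_n)$ is realized along a single trajectory and is only asymptotically stationary, the Q-iterates inherit residual fluctuations, and the sharpest general conclusion is convergence in distribution to the fixed point $\hat Q^*$ of $\mathcal{F}$.

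The main obstacle I anticipate is entirely in Part (i): controlling the tangent filter $\omega_n$. Unlike the belief $\mathbf{u}_n$, which lives in a compact simplex, $\omega_n$ obeys a linear recursion $\omega_{n+1} = \Phi(\cdot)\omega_n + \partial f/\partial\theta$ whose stability is not automatic; one must show that products of the random matrices $\Phi$ have a negative top Lyapunov exponent — again a consequence of the filter's exponential forgetting under Assumption~\ref{assumption:positive observation probability} — so that $\omega_n$ stays bounded and admits the invariant law needed for the averaging. Establishing this uniform-in-$\theta$ geometric stability of the joint $(\mathbf{u}_n, \omega_n)$ system, together with the finite-moment bounds on the score that legitimize the ergodic averaging, is the technical heart of the argument, and is where I would lean most heavily on the recursive-HMM machinery of \cite{legland1997recursive, krishnamurthy2002recursive}.
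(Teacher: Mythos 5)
Your proposal is correct in outline and follows essentially the same route as the paper: the same decomposition into a projected stochastic-approximation/ODE argument for $\theta_n$ (part (i)) and an asynchronous stochastic-approximation-plus-contraction argument for $Q_n$ (part (ii)), leaning on the same references. The differences are in where the technical weight lands, and they are worth spelling out. For part (i), the paper does not re-derive filter forgetting from scratch: it verifies conditions \textbf{C}~\ref{assumption: aperiodic-irreducible-P}--\textbf{C}~\ref{assumption: regularity_on_the_update_function} of \cite{krishnamurthy2002recursive} (Appendix~\ref{Appendix A.}), and its only substantive labor is Lemma~\ref{lemma: first bound} and Lemma~\ref{lemma: second bound}, which establish the moment bounds $\Delta_2^{(0)},\Delta_4^{(0)},\Delta_2^{(1)},\Gamma_2,\bar{Y}_2<\infty$. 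This is exactly the point your plan underplays: because the extended observation $y=(o,a,r)$ contains the Gaussian reward, the per-step likelihood ratio satisfies only $b_i(y;\theta)/b_j(y;\theta)\leq\gamma_0\exp(\gamma_1|r|+\gamma_2)$, which is unbounded in $y$, so the ergodicity and uniform-integrability conditions hinge on integrating powers of this ratio against the Gaussian density; your Hilbert-metric/Lyapunov-exponent program is what sits inside the cited black box, and note that the mixing needed for forgetting comes from $\mathbf{P}$ (Assumption~\ref{assumption:nice_behavioral_policy}), with positivity of $\mathbf{O}$ (Assumption~\ref{assumption:positive observation probability}) entering through the ratio bounds rather than by itself making $f$ a strict contraction. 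For part (ii), two slips: $\bar{p}(s,\cdot)$ is not a stochastic vector---$\hat{p}(s_{n-1},s_n)$ in \eqref{eq:state_transition_est} is a joint posterior over consecutive-state pairs, summing to one over \emph{all} pairs---and the asynchronous mean ODE is $\dot{Q}=\Theta\circ(F(Q)-Q)$ with component rates $\Theta_{i,a}=\sum_k\bar{p}(i,k)/\bar{u}_a$, not simply $F(Q)-Q$; the paper (Appendix~\ref{Appendix B.}) normalizes $\bar{p}$ inside $F$, absorbs the mass and the expected recurrence times into $\Theta$, and obtains global asymptotic stability via a weighted max-norm contraction following \cite{borkar1997analog}. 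Your sup-norm contraction claim still holds (since $\sum_{s'}\bar{p}(s,s')\leq 1$), but on its own it does not give stability of the rate-scaled ODE, which is the step the paper handles explicitly. Finally, the square-summability $\sum_n\epsilon_n^2<\infty$ you impose is not required by the weak-convergence machinery of \cite{kushner2003stochastic} used in the paper, and would in fact exclude the paper's own step size $\epsilon_n=n^{-0.4}$.
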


\subsection{HMM Estimation} \label{subsection: HMM Estimation}
We employ the recursive estimators of HMM from~\cite{krishnamurthy2002recursive, legland1997recursive} for our estimation problem, where we estimate the true parameter $\theta^*$ with the model parameters $(\mathbf{P}, \mathbf{R}, \mathbf{O}, \sigma)$ being parametrized as continuously differentiable functions of the vector of real numbers $\theta \in \Theta \subset \mathbb{R}^L$, such that $\theta^*\in\Theta$ and $(\mathbf{P}_{\theta^*}, \mathbf{R}_{\theta^*}, \mathbf{O}_{\theta^*}, \sigma_{\theta^*}) = (\mathbf{P}, \mathbf{R}, \mathbf{O}, \sigma)$. We denote the functions of the parameter as  $(\mathbf{P}_\theta, \mathbf{R}_\theta, \mathbf{O}_\theta, \sigma_\theta)$ respectively. In this paper, we  consider the normalized exponential function (or softmax function)\footnote{Let $\{\alpha_{1,1}, \dots, \alpha_{I,I}\} $ denote the parameters for the probability matrix $\mathbf{P}_\theta$. Then the $(i,j)$\textsuperscript{th} element of $\mathbf{P}_\theta$ is $\frac{\exp(\alpha_{i,j})}{\sum_{j'=1}^{I} \exp(\alpha_{i,j'})}$.
} to parametrize the probability matrices $\mathbf{P}_\theta$, $\mathbf{O}_\theta$. The reward matrix $\mathbf{R}_\theta$ is a matrix in $\mathbb{R}^{I \times K}$ and $\sigma_\theta$ is a scalar.

The iterate $\theta_n$ of the recursive estimator converges to the set of the stationary points, where the gradient of the likelihood density function is zero~\cite{krishnamurthy2002recursive, legland1997recursive}. The conditional log-likelihood density function based on the sequence of the extended observations  $\{y_i = (o_i, r_i, a_i)\}_{i=0}^n$ is
\begin{equation}\label{eq: log-likelihood}
l_n(\theta) = \frac{1}{n+1}\log p_n(y_0, y_1, \dots, y_n|s_0, s_1, \dots, s_n;\theta).
\end{equation}
When the state transition and observation model parameters are available, the state estimate
\begin{equation}\label{eq: state-estimate}
\mathbf{u}_n = [u_{n,1}, u_{n,2}, \dots, u_{n,I}]^\top,
\end{equation}
where
$
u_{n,i} = P(s_n=i|y_0, y_1, \dots, y_n; \theta)
$
is calculated  from the recursive state predictor (Bayesian state belief filter)~\cite{baum1970maximization}. The state predictor is given as follows:
\begin{equation}\label{eq: Bayesian-filter}
\mathbf{u}_{n+1} = \frac{\mathbf{P}_{\theta}^\top\mathbf{B}(y_n;\theta)\mathbf{u}_n}{\mathbf{b}^\top(y_n;\theta)\mathbf{u}_n},
\end{equation}
where
\begin{equation}\label{eq: Output-likelihood}
\mathbf{b}(y_n;\theta) = [b_1(y_n;\theta), b_2(y_n;\theta), \dots, b_I(y_n;\theta)]^\top,
\end{equation}
\begin{align*}
b_i(y_n;\theta) &= p(y_n|s_n = i ;\theta) \\
                &= P(o_n|s_n=i; \theta)P(a_n|o_n)p(r_n|s_n=i, a_n; \theta),
\end{align*}
and $\mathbf{B}(y_n;\theta)$ is the diagonal matrix with $\mathbf{b}(y_n;\theta)$.
Using Markov property of the state transitions and the conditional independence of the observations given the states, it is easy to show that the conditional likelihood density~\eqref{eq: log-likelihood} can be expressed with the state prediction $\mathbf{u}_n(\theta)$ and the observation likelihood $\mathbf{b}(y_n;\theta)$ as follows~\cite{krishnamurthy2002recursive,legland1997recursive}:
\begin{equation}\label{eq: log-likelihood-state-estimate}
l_n(\theta) = \frac{1}{n+1}\sum_{k=0}^n \log \left(\mathbf{b}^\top(y_n;\theta)\mathbf{u}_n\right).
\end{equation}
\begin{remark}
Since the functional parameterization of $(\mathbf{P}_\theta, \mathbf{R}_\theta, \mathbf{O}_\theta, \sigma_\theta)$ uses the non-convex soft-max functions, $l(\theta)$ is non-convex in general.
\end{remark}

Roughly speaking, the recursive HMM estimation~\cite{krishnamurthy2002recursive,legland1997recursive} calculates the online estimate of the gradient of $l_n(\theta_n)$ based on the current output $y_n$, the state prediction $\mathbf{u}_n(\theta_n)$, and the current parameter estimate $\theta_n$ and adds the stochastic gradient to the current parameter estimate $\theta_n$, i.e. it is a stochastic gradient \emph{ascent} algorithm to maximize the conditional likelihood.

We first introduce the HMM estimator~\cite{krishnamurthy2002recursive,legland1997recursive} and then apply the convergence result~\cite{krishnamurthy2002recursive} to our estimation task. The recursive HMM estimation in Algorithm~\ref{alg: HMM Q-Learning} is given by:
\begin{equation}\label{eq: HMM_estimator}
    \theta_{n+1} = \Pi_{H} \left[ \theta_{n} + \epsilon_n \mathbf{S}\left(y_n, \mathbf{u}_n, \mathbf{\omega}_n ;\theta_n\right) \right],
\end{equation}
\begin{equation}\label{eq: Score_estimate}
        \mathbf{S}\left(y_n, \mathbf{u}_n, \mathbf{\omega}_n;\theta_n\right) =\frac{\partial  \log \left(\mathbf{b}^\top(y_n;\theta_n)\mathbf{u}_n\right)}{\partial \theta},
\end{equation}
where $\Pi_H$ denotes the projection onto the convex constraint set $H\subseteq\Theta$, $\epsilon_n \geq 0$ denotes the diminishing step-size such that $\epsilon_n \rightarrow 0, \; \sum_n \epsilon_n = \infty$, $\mathbf{\omega}_n \in \mathbb{R}^{I\times L}$ denotes the Jacobian  of  the state prediction vector $\mathbf{u}_n$ with respect to the parameter vector $\theta_n$.
\begin{remark}
(i) The diminishing step-size used above is standard in the stochastic approximation algorithms (see Chapter 5.1 in~\cite{kushner2003stochastic}). (ii) The algorithm with a projection on to the constraint convex set $H$ has advantages such as guaranteed stability and convergence of the algorithm, preventing numerical instability (e.g. floating point underflow) and avoiding exploration in the parameter space far away from the true one. The useful parameter values in a properly parametrized practical problem are usually confined by constraints of physics or economics to some compact set~\cite{kushner2003stochastic}. $H$ can be usually determined based on the solution analysis depending on the problem structure.
\end{remark}

Using Calculus, the equation~\eqref{eq: Score_estimate} is written in terms of $\mathbf{u}_n$, $\mathbf{\omega}_n$, $\mathbf{b}(y_n;\theta_n)$, and its partial derivatives as follows:
\begin{equation*}
\mathbf{S}\left(y_n, \mathbf{u}_n, \mathbf{\omega}_n; \theta_n\right) =
\begin{bmatrix}
   S^{(1)}\left(y_n, \mathbf{u}_n, \mathbf{\omega}_n;\theta_n\right) \\
   S^{(2)}\left(y_n, \mathbf{u}_n, \mathbf{\omega}_n;\theta_n\right) \\
                                                                                 \vdots                                       \\
   S^{(L)}\left(y_n, \mathbf{u}_n, \mathbf{\omega}_n;\theta_n\right) \\
\end{bmatrix},
\end{equation*}
\begin{equation}\label{eq: S_n}
\begin{aligned}
   &S^{(l)}\left(y_n, \mathbf{u}_n, \mathbf{\omega}_n;\theta_n\right)  \\
   %================================================================
   &= \frac{\mathbf{b}^\top(y_n;\theta_n) \mathbf{\omega}^{(l)}_n}{\mathbf{b}^\top(y_n;\theta_n)\mathbf{u}_n}
        +
        \frac{\left((\partial/\partial \theta^{(l)}) \mathbf{b}^\top(y_n;\theta_n)\right) \mathbf{u}_n}{\mathbf{b}^\top(y_n;\theta_n)\mathbf{u}_n},
\end{aligned}
\end{equation}
where $\mathbf{\omega}^{(l)}_n$ is the $l$\textsuperscript{th} column of the $\mathbf{\omega}_n \in \mathbb{R}^{I \times L}$,
$\mathbf{u}_n(\theta_n)$ is recursively updated using the state predictor in~\eqref{eq: Bayesian-filter} as
\begin{equation}\label{eq: update-u}
\mathbf{u}_{n+1} = \frac{\mathbf{P}_{\theta_n}^\top\mathbf{B}(y_n;\theta_n)\mathbf{u}_n}{\mathbf{b}^\top(y_n;\theta_n)\mathbf{u}_n} \triangleq f(y_n, \mathbf{u}_n;\theta_n),
\end{equation}
with $\mathbf{u}_0$ being initialized as an arbitrary distribution on the finite state set, $\mathbf{P}_{\theta_n}$ being the state transition probability matrix for the current iterate $\theta_n$. The state predictor~\eqref{eq: update-u} calculates the state estimate (or Bayesian belief) on the $s_{n+1}$ by normalizing the conditional likelihood $p(y_n|s_n = i;\theta_n)P(s_n=i|y_0,\dots, y_n)$ and then multiplying it with the state transition probability $P(s_{n+1}=j|s_n=i;\theta_n)$. The predicted state estimate is used recursively to calculate the state prediction in the next step. Taking derivative on the update law~\eqref{eq: update-u}, the update law for $\mathbf{\omega}^{(l)}_n$ is
\begin{equation}\label{eq: update-omega}
    \mathbf{\omega}^{(l)}_{n+1}=\Phi(y_n, \mathbf{u}_{n};\theta_n)\mathbf{\omega}^{(l)}_n+\frac{\partial f(y_n, \mathbf{u}_{n};\theta_n)}{\partial \theta^{(l)}},
\end{equation}
where
{\small
\begin{equation*}
\Phi(y_n, \mathbf{u}_{n};\theta_n) = \frac{\mathbf{P}_{\theta_n}^\top\mathbf{B}(y_n;\theta_n)}{\mathbf{b}^\top(y_n;\theta_n)\mathbf{u}_n}
\left(\mathbf{I} - \frac{\mathbf{u}_n\mathbf{b}^\top(y_n;\theta_n)}{\mathbf{b}^\top(y_n;\theta_n)\mathbf{u}_n} \right),
\end{equation*}
\begin{equation*}
\begin{aligned}
%================================================================
&\frac{\partial f(y_n, \mathbf{u}_{n};\theta_n)}{\partial \theta^{(l)}}\\
& = \mathbf{P}_{\theta_n}^\top \left(\mathbf{I} - \frac{\mathbf{B}(y_n;\theta_n) \mathbf{u}_n \mathbf{e}^\top }{\mathbf{b}^\top(y_n;\theta_n)\mathbf{u}_n} \right)\frac{\left(\partial \mathbf{B}(y_n;\theta_n)   /  \partial \theta^{(l)} \right) \mathbf{u}_n}{\mathbf{b}^\top(y_n;\theta_n)\mathbf{u}_n } \\
%================================================================
& + \frac{\left( \partial \mathbf{P}_{\theta_n}^\top   /  \partial \theta^{(l)} \right)\mathbf{B}(y_n;\theta_n) \mathbf{u}_n}{\mathbf{b}^\top(y_n;\theta_n)\mathbf{u}_n},
\end{aligned}
\end{equation*}
}
$\theta^{(l)}$ denotes the $l$\textsuperscript{th} element of the  parameter $\theta_n$, $\mathbf{I}$ denotes the $I \times I$ identity matrix, $\mathbf{e}=[1,\dots, 1]^\top$, the initial $\omega^{(l)}_0$ is arbitrarily  chosen from $\Sigma = \{\omega^{(l)} \in \mathbb{R}^I :e^\top \omega^{(l)}  = 0\}$.

At each time step $n$, the HMM estimator defined by~\eqref{eq: HMM_estimator},~\eqref{eq: S_n},~\eqref{eq: update-u}, and~\eqref{eq: update-omega} updates $\theta_n$ based on the current sample $y_n=(o_n, r_n, a_n)$, while keeping track of the state estimate $\mathbf{u}_n$, and its partial derivative $\omega_n$.

Now we state the convergence of the estimator.
\begin{proposition}\label{proposition: convergence of HMM estimation}
Suppose that Assumption~\ref{assumption:nice_behavioral_policy} and Assumption~\ref{assumption:positive observation probability} hold. Then, the following statements hold:

(i) The extended Markov chain $\{ s_n, y_n, \mathbf{u}_n, \omega_n \}$ is geometrically ergodic\footnote{
A Markov chain with transition probability matrix $\mathbf{P}$ is \emph{geometrically ergodic}, if for finite constants $c_{ij}$ and a $\beta < 1$
\begin{equation*}
    |(\mathbf{P}^n)_{i,j} - \pi_j| \leq c_{ij} \beta^n,
\end{equation*}
where $\pi$ denotes the stationary distribution.
}.

(ii) For $\theta \in \Theta$, the log-likelihood $l_n(\theta)$ in \eqref{eq: log-likelihood} almost surely converges to $l(\theta)$,
\begin{equation}\label{eq: stationary-log-likelihood}
    l(\theta) = \int_{\mathcal{Y}\times\mathcal{P}(\mathcal{S})} \log[ \mathbf{b}^\top(y;\theta)\mathbf{u}] \; \nu(dy, d\mathbf{u}),
\end{equation}
where $\mathcal{Y}:=\mathcal{O} \times \mathbb{R} \times \mathcal{A}$, $\mathcal{P}(\mathcal{S})$ is the set of probability distribution on $\mathcal{S}$, and $\nu(dy, d\mathbf{u})$ is the marginal distribution of $\nu$, which is the invariant distribution of the extended Markov chain.

(iii) The iterate $\{\theta_n\}$ converges almost surely to the invariant set (set of equilibrium points) of the ODE
\begin{equation}\label{eq: ODE_HMM}
    \dot{\theta} = \mathbf{H}(\theta) + \tilde{m}= \Pi_{T_H(\theta)}[\mathbf{H}(\theta)], \quad \theta(0) = \theta_0,
\end{equation}
where $\mathbf{H}(\theta) = E[\mathbf{S}(y_n, \mathbf{u}_n, \mathbf{\omega}_n;\theta)]$, the expectation $E[\cdot]$ is taken with respect to $\nu$, and $\tilde{m}(\cdot)$ is the projection term to keep in $H$, $T_H(\theta)$ is the tangent cone of $H$ at $\theta$~\cite[pp.~343]{bertsekas1999nonlinear}.
\end{proposition}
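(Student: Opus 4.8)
The plan is to obtain the three claims by verifying that the HMM induced by the POMDP under the behavior policy $\mu$ satisfies the hypotheses of the recursive HMM convergence theory of~\cite{krishnamurthy2002recursive,legland1997recursive}, and then invoking the projected stochastic approximation machinery of~\cite{kushner2003stochastic}. Two structural facts make this possible. First, the softmax parametrization forces $\mathbf{P}_\theta$ and $\mathbf{O}_\theta$ to have strictly positive entries for every $\theta\in H$, which together with Assumption~\ref{assumption:positive observation probability} guarantees that the observation likelihoods $b_i(y;\theta)$ are strictly positive. Second, the state space is finite, so Assumption~\ref{assumption:nice_behavioral_policy} yields uniform (geometric) ergodicity of the hidden chain $\{s_n\}$. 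The only genuinely unbounded quantity is the Gaussian reward $r_n$, and controlling its effect is what the argument must handle carefully.

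For part (i), I would first establish \emph{filter stability}: because $\mathbf{P}_\theta$ is entrywise positive on the compact set $H$, its Hilbert projective (Birkhoff) contraction coefficient $\tau(\mathbf{P}_\theta^\top)<1$ uniformly in $\theta$. The map $\mathbf{u}\mapsto f(y,\mathbf{u};\theta)$ in~\eqref{eq: update-u} is a projective scaling by the positive diagonal matrix $\mathbf{B}(y;\theta)$ followed by application of $\mathbf{P}_\theta^\top$; since the Hilbert metric is invariant under positive diagonal scaling, the contraction rate is \emph{independent of the observation value}, so the unbounded reward does not degrade forgetting. This yields exponential forgetting of the initial condition $\mathbf{u}_0$ uniformly in the data. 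The derivative filter $\omega_n$ in~\eqref{eq: update-omega} obeys a linear recursion whose homogeneous operator $\Phi(y_n,\mathbf{u}_n;\theta)$ inherits this contraction on the tangent space $\Sigma=\{\omega:\mathbf{e}^\top\omega=0\}$, while its forcing term has finite conditional moments because $\partial\mathbf{B}/\partial\theta^{(l)}$ contributes at most polynomially in $r_n$. Combining geometric ergodicity of $\{s_n\}$ with the exponential forgetting of $(\mathbf{u}_n,\omega_n)$, a drift-and-minorization (Meyn--Tweedie) argument with a Lyapunov function polynomial in $r_n$ establishes geometric ergodicity of the extended chain $\{s_n,y_n,\mathbf{u}_n,\omega_n\}$ and the existence of the invariant law $\nu$.

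For part (ii), geometric ergodicity from (i) supplies a strong law of large numbers for the additive functional in~\eqref{eq: log-likelihood-state-estimate}, so that $l_n(\theta)\to E_\nu[\log(\mathbf{b}^\top(y;\theta)\mathbf{u})]$ almost surely, which is exactly~\eqref{eq: stationary-log-likelihood}. The point requiring verification is $\nu$-integrability of $\log(\mathbf{b}^\top(y;\theta)\mathbf{u})$: positivity of $\mathbf{O}_\theta$ bounds the observation and policy factors away from $0$ and $\infty$, while the Gaussian factor contributes $\log p(r\mid s,a)\sim-(r-r(s,a))^2/(2\sigma^2)$, whose absolute value has finite expectation since $\nu$ has a finite second moment in $r$ (again from the drift bound). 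Hence the integral is finite, and the limit is well defined and continuous in $\theta$.

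For part (iii), I would cast~\eqref{eq: HMM_estimator} as a projected stochastic approximation with Markovian (state-dependent) noise and apply the ODE method of~\cite[Ch.~5]{kushner2003stochastic}. The required ingredients are: the diminishing step size $\epsilon_n\to0,\ \sum_n\epsilon_n=\infty$; boundedness of $\{\theta_n\}$, automatic from the projection $\Pi_H$; the mean-field identity $\mathbf{H}(\theta)=E_\nu[\mathbf{S}]=\nabla l(\theta)$, obtained by differentiating~\eqref{eq: stationary-log-likelihood} and using that the recursive score~\eqref{eq: S_n} tracks the true gradient thanks to the exponential forgetting of $\omega_n$ from (i); and averaging of the Markov noise, which follows from geometric ergodicity via a Poisson-equation decomposition splitting $\mathbf{S}(y_n,\mathbf{u}_n,\omega_n;\theta_n)-\mathbf{H}(\theta_n)$ into a martingale difference plus an asymptotically negligible remainder. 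The Kushner--Clark theorem then gives almost sure convergence of $\{\theta_n\}$ to the limit set of the projected ODE~\eqref{eq: ODE_HMM}. The main obstacle throughout is part (i): proving geometric ergodicity of the \emph{joint} chain that includes the derivative filter $\omega_n$ while the reward component is unbounded, since this single fact simultaneously underwrites the ergodic limit in (ii), the gradient identity, and the noise-averaging in (iii).
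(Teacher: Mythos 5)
Your strategy reaches the right conclusions, but it takes a genuinely different route from the paper. The paper's proof (Appendix~\ref{Appendix A.}) treats the recursive HMM estimation theory of~\cite{krishnamurthy2002recursive} as a black box: it states conditions \textbf{C}~\ref{assumption: aperiodic-irreducible-P}--\textbf{C}~\ref{assumption: regularity_on_the_update_function} of that reference, notes that Assumption~\ref{assumption:nice_behavioral_policy} gives \textbf{C}~\ref{assumption: aperiodic-irreducible-P} and that the softmax/Gaussian parametrization gives the smoothness in \textbf{C}~\ref{assumption: differentiable_model}, and then verifies \textbf{C}~\ref{assumption: geometrically_ergodic_extended_MC} (geometric ergodicity) and \textbf{C}~\ref{assumption: regularity_on_the_update_function} (regularity/uniform integrability) \emph{indirectly}, through the sufficient moment conditions of Remarks 2.6 and 3.1 in~\cite{krishnamurthy2002recursive}: Lemma~\ref{lemma: first bound} and Lemma~\ref{lemma: second bound} show that the quantities $\Delta^{(0)}_2,\Delta^{(0)}_4,\Delta^{(1)}_2,\Gamma_2,\bar Y_2$ in \eqref{eq: integration_bounds} are finite, the key computation being the likelihood-ratio bound $b_i(y;\theta)/b_j(y;\theta)\leq\gamma_0\exp(\gamma_1|r|+\gamma_2)$ (from positivity of $\mathbf{O}_\theta$), which is then integrated against Gaussian tails. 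You instead propose to re-derive the mechanism that underlies those conditions: Hilbert-metric contraction for filter forgetting, a Lyapunov argument for the extended chain, an ergodic theorem for part (ii), and a Poisson-equation/Kushner--Clark argument for part (iii). That amounts to reproving the contents of~\cite{krishnamurthy2002recursive,legland1997recursive} rather than invoking them. Your version is more self-contained and makes explicit \emph{why} positivity plus Gaussian integrability suffice (and would generalize to other observation models); the paper's version is far shorter and reduces everything to elementary Gaussian integral estimates, at the price of leaning entirely on the cited theorem. Both hinge on the same two structural facts you identify at the outset.

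One caveat on your part (i), which you correctly flag as the crux: the classical Meyn--Tweedie route ($\psi$-irreducibility, small sets, minorization plus drift) does not apply verbatim to the filter-extended chain $\{s_n,y_n,\mathbf{u}_n,\omega_n\}$, because the components $(\mathbf{u}_n,\omega_n)$ evolve deterministically given the observations; such chains are generally not $\psi$-irreducible, and two copies started from different $\mathbf{u}_0$ never couple exactly but only converge exponentially. The correct substitute is precisely the exponential forgetting you established: geometric ergodicity here must be understood in the weak (Wasserstein-type) sense used in~\cite{legland1997recursive} and in condition \textbf{C}~\ref{assumption: geometrically_ergodic_extended_MC}, where contraction of the filter map replaces minorization. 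Since your Hilbert-metric step already supplies that contraction, this is a misattribution of technique rather than a fatal flaw, but as written the ``drift-and-minorization'' sentence would not survive scrutiny and should be replaced by a forgetting-plus-coupling argument.
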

{
\begin{remark}
The second equation in~\eqref{eq: ODE_HMM} is due to~\cite[Appendix~E]{bhatnagar2012stochastic}. Using the definitions of tangent
and normal cones~\cite[pp.~343]{bertsekas1999nonlinear}, we can readily prove that the set of stationary points of~\eqref{eq: ODE_HMM} is $\{
\theta\in H:\Pi_{T_H(\theta)}(\mathbf{H}(\theta))=0\}=\{ \theta\in H:\mathbf{H}(\theta)\in N_H(\theta)\}$, where $N_H(\theta)$ is the normal cone of $H$ at $\theta \in H$. Note that the set of stationary points is identical to the set of KKT points of the constrained nonlinear programming $\min_{\theta \in H} l(\theta)$.
\end{remark}}
\begin{remark}
Like other maximum likelihood estimation algorithms, further assuming that $l(\theta)$ is concave, it is possible to show the ${\theta_n}$ converges to the unique maximum likelihood estimate. However, the convexity of $l(\theta)$ is not known in prior. Similarly,  asymptotic stability of the ODE~\eqref{eq: ODE_HMM} is assumed to show the desired convergence in~\cite{krishnamurthy2002recursive}. We refer to~\cite{krishnamurthy2002recursive} for the technical details regarding the convergence set.
\end{remark}

\begin{proof}
We employed the convergence result in~\cite{krishnamurthy2002recursive}. We prove that the HMM estimation converges to the invariant set of ODE~\eqref{eq: ODE_HMM} by verifying the assumptions in~\cite{krishnamurthy2002recursive} for the POMDP with the behavior policy described in Section~\ref{sec: A HMM}. See Appendix~\ref{Appendix A.} for the details.
\end{proof}

\subsection{Estimating Q-function with the HMM State Predictor}\label{subsection: Q Estimation}
In addition to estimation of the HMM parameters $(\mathbf{P}, \mathbf{R}, \mathbf{O}, \sigma)$, we aim to \emph{recursively} estimate the optimal action-value function $Q^*(s,a): \mathcal{S}\times\mathcal{A} \rightarrow \mathbb{R}$ using \emph{partial} state observation.

From  Bellman's optimality principle, $Q^*(s,a)$ function is defined as
{\small
\begin{equation}\label{eq:Bellam_Optimality}
    Q^*(s, a)=\sum_{s'}P(s'|s,a)\left(r(s,a) + \gamma \max_{a'}Q^*(s',a') \right),
\end{equation}
}
where $P(s'|s,a)$ is the state transition probability, which corresponds to $T_a(s,s')$ in the POMDP model.
The standard Q-learning from~\cite{watkins1992q} estimates $Q^*(s,a)$ function using the recursive form:
{\small
\begin{equation*}\label{eq:Q_Learning}
\begin{aligned}
    &Q_{n+1}(s_n, a_n)\\
                                  &=Q_{n}(s_n, a_n) + \epsilon_n \left(r_n + \gamma \max_{a'}Q_n(s_{n+1},a') - Q_{n}(s_n, a_n) \right).
\end{aligned}
\end{equation*}
}
Since the state $s_n$ is not directly observed in POMDP, the state estimate $\mathbf{u}_n$ in~\eqref{eq: update-u} from the HMM estimator is used instead of $s_n$. Define the estimated state transition $\hat{p}(s_{n-1},s_{n})$ as
\begin{equation}\label{eq:state_transition_est}
\begin{aligned}
    &\hat{p}(s_{n-1},s_{n})\\
    &= P(s_{n-1}, s_{n}|y_n, y_{n-1}, \mathbf{u}_n, \mathbf{u}_{n-1} ; \theta_n, \theta_{n-1}) \\
    &= P(s_{n-1}|y_{n-1}, \mathbf{u}_{n-1} ; \theta_{n-1})P(s_n|y_n, \mathbf{u}_{n};\theta_n),
\end{aligned}
\end{equation}
where $P(s_n|y_n, \mathbf{u}_n;\theta_n)$ is calculated using Bayes rule:
\begin{equation}\label{eq:state_est_F_n}
    P(s_n=i|y_n, \mathbf{u}_n; \theta_n) = \frac{b_i(y_n) u_{n,i}}{\sum_{j} b_j(y_n) u_{n,j}}.
\end{equation}
Using $\hat{p}(i, j)$ as a surrogate for $P(s'|s,a)$ in \eqref{eq:Bellam_Optimality}, a recursive estimator for $Q^*(s,a)$ is proposed as follows:
{\small
\begin{equation}\label{eq: Q_estimator_with_HMM}
    \begin{aligned}
        &\begin{bmatrix}
            q_{n+1}(1, a_n)\\
            q_{n+1}(2, a_n)\\
            \vdots     \\
            q_{n+1}(I, a_n)
        \end{bmatrix}
        =
        \begin{bmatrix}
            q_{n}(1, a_n)\\
            q_{n}(2, a_n)\\
            \vdots     \\
            q_{n}(I, a_n)
        \end{bmatrix}
        +\\
        &\epsilon_n
        \begin{bmatrix}
            \sum_{j}^I\hat{p}_n(1,j)\left(r_n + \gamma \max_{a'}q_{n}(j,a') - q_{n}(1, a_n) \right) \\
            \sum_{j}^I\hat{p}_n(2,j)\left(r_n + \gamma \max_{a'}q_{n}(j,a') - q_{n}(2, a_n) \right) \\
            \vdots \\
            \sum_{j}^I\hat{p}_n(I,j)\left(r_n + \gamma \max_{a'}q_{n}(j,a') - q_{n}(I, a_n) \right)
        \end{bmatrix},
    \end{aligned}
\end{equation}
}
where $q_n(i,a_n) = Q_n(s=i,a=a_n)$. In the following proposition we establish the convergence of~\eqref{eq: Q_estimator_with_HMM}.

\begin{proposition}\label{proposition: convergence of Q estimation}
Suppose that~Assumption~\ref{assumption:nice_behavioral_policy} and~Assumption~\ref{assumption:positive observation probability} hold. Then the following ODE has a unique globally asymptotically stable equilibrium point:
{\small
\begin{equation*}
        \begin{bmatrix}
            \dot{q}_{1,a}\\
            \dot{q}_{2,a}\\
            \vdots     \\
            \dot{q}_{I,a}
        \end{bmatrix}
        =
        \frac{1}{\bar{u}_a}
        \begin{bmatrix}
        \sum_{j}^I\bar{p}(1,j) (\bar{r} + \gamma \max_{a'}q_{j,a'} - q_{1, a}) \\
        \sum_{j}^I\bar{p}(2,j) (\bar{r} + \gamma \max_{a'}q_{j,a'} - q_{2, a}) \\
        \vdots \\
        \sum_{j}^I\bar{p}(I,j)  (\bar{r} + \gamma \max_{a'}q_{j,a'} - q_{I, a})
        \end{bmatrix},
        \quad a\in {\cal A},
\end{equation*}
}
where $\bar{u}_a$ is determined by the expected frequency of the recurrence to the action $a$ (for the detail, see Appendix~\ref{Appendix B.}), $\bar{p}(i,j)$ denotes the expectation of $\hat{p}(i,j)$, $\bar{r}$ denotes the expectation of $R(s,a)$ and the expectations are taken with the invariant distribution $\nu$. As a result, the iterate $\{Q_n\}$ of the recursive estimation law in~\eqref{eq: Q_estimator_with_HMM} converges in distribution to the unique equilibrium point $\hat Q^*$ of the ODE, i.e., the unique solution of the Bellman equation
{\small
\begin{equation*}
\begin{aligned}
&\hat Q(s, a) = \sum_{s'}\bar{p}(s,s')\left(\bar{r}(s,a) + \gamma \max_{a'} \hat Q(s',a') \right).
\end{aligned}
\end{equation*}
}
\begin{remark}
Note that $\hat p_n$ is the continuous function of the random variables $(y_n, \mathbf{u}_n, \theta_n)$, which almost surely converges due to the ergodicity of the Markov chain $(y_n, \mathbf{u}_n, \omega_n)$ and the  convergence of $\theta_n$ (proven above). By continuous mapping theorem from~\cite{durrett2010probability}, $\hat p_n$ as a continuous function of the converging random variables converges in the same sense.
\end{remark}
\end{proposition}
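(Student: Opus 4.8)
The plan is to treat the recursive law~\eqref{eq: Q_estimator_with_HMM} as an \emph{asynchronous} stochastic approximation and analyze it by the ODE method. First I would note that at step $n$ only the block $\{q_n(i,a_n)\}_{i=1}^{I}$ tied to the executed action $a_n$ is refreshed, and the increment driving each entry, $\sum_j \hat p_n(i,j)\bigl(r_n+\gamma\max_{a'}q_n(j,a')-q_n(i,a_n)\bigr)$, is a continuous function of the triple $(y_n,\mathbf{u}_n,\theta_n)$. By Proposition~\ref{proposition: convergence of HMM estimation} the chain $\{s_n,y_n,\mathbf{u}_n,\omega_n\}$ is geometrically ergodic and $\theta_n\to\theta^*$ almost surely, so averaging this increment against the invariant distribution $\nu$ — together with the long-run refresh frequency of each action, which supplies the scalar $\bar u_a$ — yields the mean field on the right-hand side of the stated ODE. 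The factor $1/\bar u_a$ is precisely the time rescaling produced by the fact that action $a$ is updated on only a fraction of the iterations; I would make this rigorous with asynchronous stochastic-approximation theory, checking that every state--action pair recurs infinitely often (guaranteed by the strict positivity of the visit probability in Assumption~\ref{assumption:nice_behavioral_policy}), so that no component is starved and every $\bar u_a>0$.

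The core analytic step is global asymptotic stability of the ODE. Setting the bracket to zero (legitimate since $1/\bar u_a>0$) and writing $\rho_i:=\sum_j\bar p(i,j)$, $\tilde p(i,j):=\bar p(i,j)/\rho_i$ so that $\tilde p(i,\cdot)$ is a genuine probability vector, the equilibrium condition reduces to the fixed-point equation $q_{i,a}=(Tq)_{i,a}:=\sum_j\tilde p(i,j)\bigl(\bar r(i,a)+\gamma\max_{a'}q_{j,a'}\bigr)$, which is the Bellman equation of the statement once the per-row sums $\rho_i$ are absorbed into the positive diagonal scaling. I would then show that $T$ is a $\gamma$-contraction in the sup-norm: because $\tilde p(i,\cdot)$ sums to one and $|\max_{a'}q_{j,a'}-\max_{a'}q'_{j,a'}|\le\|q-q'\|_\infty$, the standard Bellman estimate gives $\|Tq-Tq'\|_\infty\le\gamma\|q-q'\|_\infty$. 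Hence $T$ has a unique fixed point $\hat Q^*$. Since the ODE takes the form $\dot q=D\,(Tq-q)$ for a positive diagonal matrix $D$ (entries $\rho_i/\bar u_a$), I would invoke the known fact that an ODE induced by a sup-norm contraction has that contraction's fixed point as a globally asymptotically stable equilibrium, exhibiting the Lyapunov function $V(q)=\|q-\hat Q^*\|_\infty$ (or a smoothed variant) and verifying $\dot V<0$ off the equilibrium.

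With the ODE limit established, the final step transfers stability into convergence of the iterates. I would appeal to the ODE theorem for projected/asynchronous stochastic approximation~\cite{kushner2003stochastic}, already used for Proposition~\ref{proposition: convergence of HMM estimation}, whose hypotheses — bounded iterates, geometrically ergodic Markov driving noise, and diminishing step sizes with $\sum_n\epsilon_n=\infty$ — are furnished by the earlier results and assumptions. Because the mean field is built from $\bar p$ and $\bar r$, which are limits of the random quantities $\hat p_n$ and $r_n$ rather than constants available at finite $n$, the conclusion is stated as convergence \emph{in distribution}: as in the accompanying remark, $\hat p_n$ is a continuous function of the converging variables $(y_n,\mathbf{u}_n,\theta_n)$, so the continuous mapping theorem~\cite{durrett2010probability} passes the limit through and identifies the limit point with the unique equilibrium $\hat Q^*$.

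I expect the main obstacle to be the rigorous derivation of the mean ODE rather than the contraction argument. Two coupled difficulties arise: (a) the update is asynchronous, so the heterogeneous per-action refresh rates that produce the $1/\bar u_a$ weights must be accounted for exactly and shown to be strictly positive; and (b) the parameter $\theta_n$ inside $\hat p_n$ is still drifting while $Q_n$ is updated, so the driving increments come from a chain that is only \emph{asymptotically} stationary. Justifying that one may replace $\hat p_n(\theta_n)$ by its stationary expectation $\bar p=\bar p(\theta^*)$ — a continuous-dependence / slow-fast argument leaning on $\theta_n\to\theta^*$ and geometric ergodicity — is the delicate part; once the limiting mean field is secured, stability and convergence follow from the contraction structure above.
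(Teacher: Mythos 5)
Your proposal is correct and follows essentially the same route as the paper's proof: the mean ODE with the $1/\bar{u}_a$ time-rescaling is derived from Kushner--Yin asynchronous stochastic approximation (the paper verifies the Chapter-12 conditions in detail, using the geometric ergodicity already established for the HMM estimator and Assumption~\ref{assumption:nice_behavioral_policy} for the recurrence times), the equilibrium is identified with the fixed point of the row-normalized Bellman operator, and uniqueness plus global asymptotic stability come from the $\gamma$-contraction in the max-norm, with the continuous mapping theorem handling $\hat p_n$ exactly as in your last step. The only cosmetic difference is how the positive diagonal scaling is absorbed: you exhibit the sup-norm Lyapunov function directly for $\dot q = D(Tq - q)$, whereas the paper defines the weighted max-norm $\| A \|_{\Theta^{-1},\infty}$, shows $\Theta \circ F$ is a contraction in that norm, and follows the argument of~\cite{borkar1997analog} --- the same idea in different packaging.
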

\begin{proof}
The update of $Q^\epsilon_n$ is asynchronous, as we update the part of $Q_n(s,a)$ for the current action taken. Result on  stochastic approximation from~\cite{kushner2003stochastic} is invoked to prove the convergence. The proof follows from the ergodicity of the underlying Markov chain and the contraction of the operator $HQ =\sum_{s'}\hat{p}(s,s';\theta_L)\left(r(s,a) + \gamma \max_{a'}Q(s',a') \right)$. See Appendix~\ref{Appendix B.} for the details.
\end{proof}

\subsection{Learning State Transition given Action with the HMM State Predictor}\label{subsection: T Estimation}
When the full state observation is available, the transition model   $T_a(s,s')=P(s_{n+1}=s'|s_n=s, a_n=a)$ can be estimated simply counting all the incidents of each transition $(s, a, s')$, and the transition model estimation corresponds to the maximum likelihood estimate. Since the state is partially observed, we use the state estimate instead of  counting transitions.

We aim to estimate the expectation of the following indicator function
\begin{equation}
    T_{s,a,s'} = E[\mathds{1}_{\{s_{n}=s, a_n=j, s_{n+1}=s'\}}],
\end{equation}
where the expectation $E$ is taken with respect to  the stationary distribution corresponding to the true parameter $\theta^*$. Thus, $T_{s,a,s'}$ is the expectation of the counter of the transition ${s,a,s'}$ divided by the total number of transitions (or the stationary distribution $P(s,a,s')$).
\begin{remark}
Note that although $\hat p(s,s')$ in~\eqref{eq:state_transition_est} is known, it represents only the transition probability under the fixed behavior policy. Therefore, we still need to estimate the state transition model $T_{sas'}$ for the state predictor in \eqref{eq: update u with the other policy}.
\end{remark}

The proposed recursive estimation of $T_{s,a,s'}$ is given by
{\small
\begin{equation}\label{eq: transition_estimation}
    \begin{aligned}
        \begin{bmatrix}
            T_{n+1}(1, a_n, 1)\\
            T_{n+1}(1, a_n, 2)\\
            \vdots     \\
            T_{n+1}(I, a_n, I)
        \end{bmatrix} &=
        \begin{bmatrix}
            T_{n}(1, a_n, 1)\\
            T_{n}(1, a_n, 2)\\
            \vdots     \\
            T_{n}(I, a_n, I)
        \end{bmatrix}\\
        &+
        \epsilon_n
        \begin{bmatrix}
            \hat{p}_n(1,1)(1 -  T_n(1,a_n,1)) \\
            \hat{p}_n(1,2)(1 -  T_n(1,a_n,2)) \\
            \vdots \\
            \hat{p}_n(I,I)(1 - T_n(I,a_n,I))
        \end{bmatrix}.
    \end{aligned}
\end{equation}
}
We note that the estimation in~\eqref{eq: transition_estimation} uses $\hat{p}(s,s')$ as a surrogate for $P(s'|s,a)$ in \eqref{eq:Bellam_Optimality}. The ODE corresponding to~\eqref{eq: transition_estimation} is
{\small
\begin{equation*}
        \begin{bmatrix}
            \dot{T}_{1,a,1}\\
            \dot{T}_{1,a,2}\\
            \vdots     \\
            \dot{T}_{I,a,I}
        \end{bmatrix}
        =
        \frac{1}{\bar{u}_a}
        \begin{bmatrix}
        \bar{p}(1,a,1)(1 - T_{1,a,1}) \\
        \bar{p}(1,a,2)(1 - T_{1,a,2}) \\
        \vdots \\
        \bar{p}(I,a,I)(1 - T_{I,a,I})
        \end{bmatrix}
        ,\quad a\in {\cal A}.
\end{equation*}
}
Following the same procedure in the proof of {\it Proposition~\ref{proposition: convergence of Q estimation}}, we can show that $t_n(s,a,s')$ converges to $\bar{p}(s,a,s')$, where $\bar{p}(s,a,s')$ denotes the marginal distribution of the transition from $s$ to $s'$ after taking $a$ with  respect to the invariant distribution of the entire process. Since we estimate the joint distribution, the conditional distribution $T_a(s,s')$ can be calculated by dividing the joint probabilities with marginal probabilities.

%%%%%%%%%%%%%%%%%%%%%%%%%%%%%%%%%%%%%%%%%%%%%%%%%%%%%%%%%%%%%%%%%%%%%%%%%%%%%%%%%
\section{A Numerical Example}
In this simulation, we implement the HMM Q-learning for a finite state POMDP example, where 4 hidden states are observed through 2 observations with the discount factor $\gamma = 0.95$ as specified below:
{\small
\begin{equation*}
\mathbf{T}=\left[
\begin{bmatrix}
.6& .2& .1& .1\\
.2& .1& .6& .1\\
.1&  .1& .1& .7\\
.4& .1& .1& .4
\end{bmatrix},
\begin{bmatrix}
.1& .2& .2& .5\\
.1& .6& .1& .2\\
.1& .2& .6& .1\\
.1& .1& .2& .6
\end{bmatrix}
\right],
\end{equation*}
\begin{equation*}
\mathbf{O}=
\begin{bmatrix}
.95& .05\\
.95& .05\\
.05& .95\\
.05& .95
\end{bmatrix},
\;
\mathbf{R}=
\begin{bmatrix}
0& 0.& -20.& +20.\\
0& 0.& +20.& -20.\\
\end{bmatrix},
\;
\sigma = 1.
\end{equation*}
}
The following behavior policy $\mu(o)$ is used to estimate the HMM, the transition model, and the Q-function
{\small
\begin{equation*}
\mu =
\begin{bmatrix}
.6& .4\\
.3& .7
\end{bmatrix},
\quad
\mu_{i,j} = P(a=j|o=i).
\end{equation*}
}
The diminishing step size is chosen as $\epsilon_n = n^{-0.4}$ for $n\geq1$.

\subsection{Estimation of the HMM and Q-function}
Figure~\ref{fig: log-likelihood} shows that the mean of the sample conditional log-likelihood density $\log \mathbf{b}^\top(y_n;\theta_n)\mathbf{u}_n$ increases. Figure~\ref{fig: sigma} shows that $\sigma_n$ converges to the true parameter $\sigma^*=1.0$.
\begin{figure}
\centering
\begin{subfigure}{.245\textwidth}
  \centering
  \includegraphics[width=1.\linewidth]{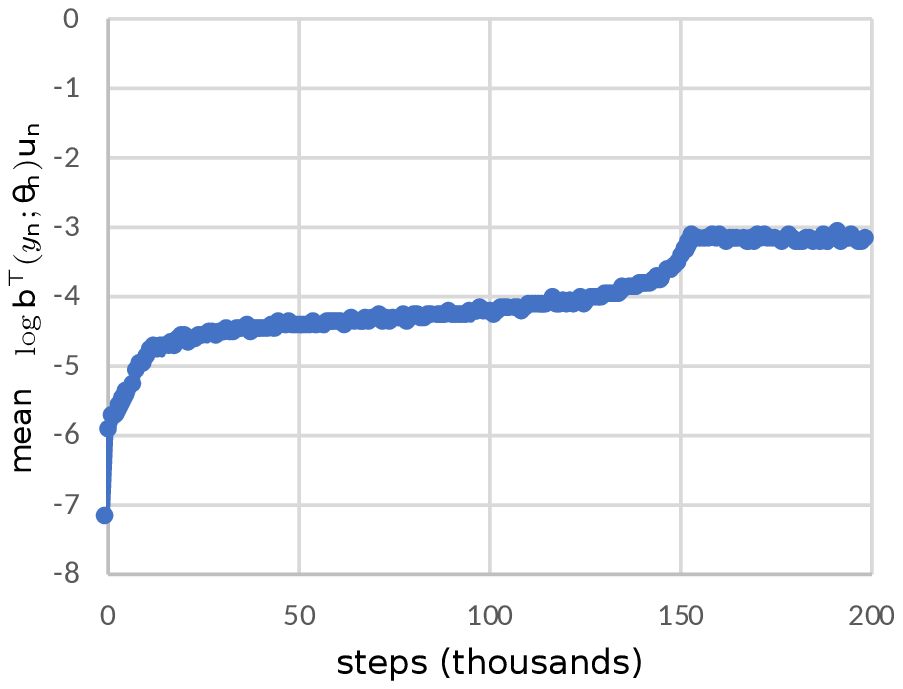}
  \caption{$\log \mathbf{b}^\top(y_n;\theta_n)\mathbf{u}_n$.}
  \label{fig: log-likelihood}
\end{subfigure}%
\begin{subfigure}{.245\textwidth}
  \centering
  \includegraphics[width=1.\linewidth]{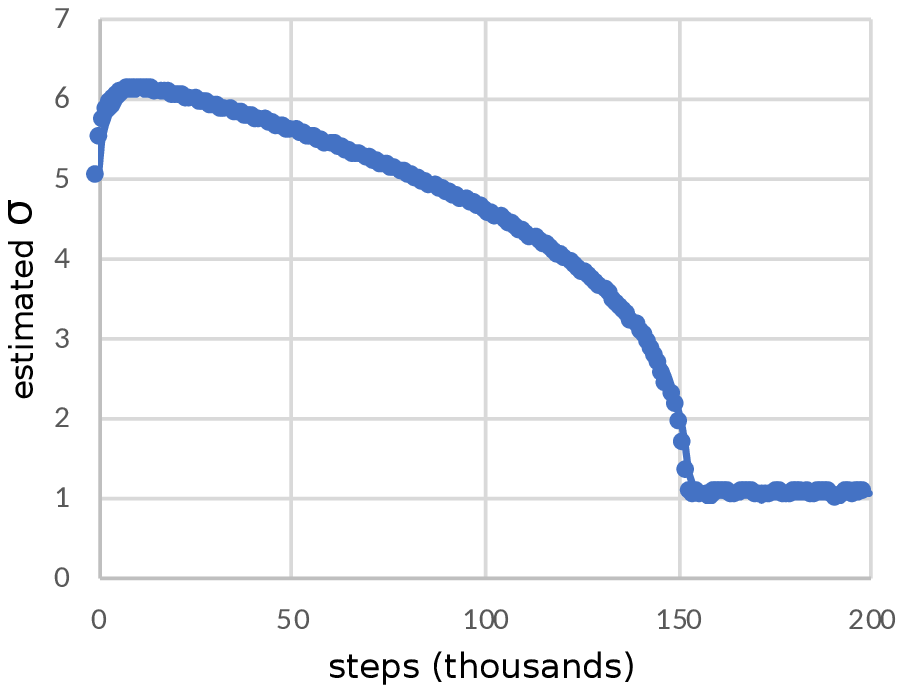}
  \caption{$\sigma(\theta_n)$}
  \label{fig: sigma}
\end{subfigure}
\caption{The mean of the sampled conditional likelihood $\log \mathbf{b}^\top(y_n;\theta_n)\mathbf{u}_n$ increases as the estimated $\sigma(\theta_n)$ converges to the true $\sigma = 1$.}
\label{fig: likelihood_and_sigma}
\end{figure}

To validate the estimation of the Q-function in~\eqref{eq: Q_estimator_with_HMM}, we run three estimations of Q-function in parallel: (i) Q-learning~\cite{watkins1992q} with full state observation $s$, (ii) Q-learning with partial observation $o$, (iii) HMM Q-learning. Figure~\ref{fig: maxQ} shows  $\max_{s,a} Q_n(s, a)$ for all three algorithms.
\begin{figure}[thpb]
\centering
 \includegraphics[width=0.3\textwidth]{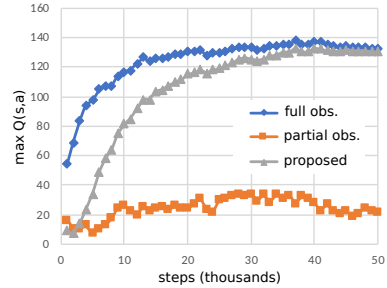}
 \caption{$\max_{s,a} Q_n(s, a)$ is greater with full observation than partial observation. The proposed HMM Q-learning's $\max_{s,a} Q_n(s, a)$ converges to the fully observing Q-learning's.}
\medskip
\label{fig: maxQ}
\end{figure}

After 200,000 steps, the iterates of $Q^\text{full}_n$,  $Q^\text{partial}_n$ and $Q^\text{hmm}_n$ at $n=2 \times 10^5$ are as follows:
{\small
\begin{equation*}
\begin{aligned}
Q^\text{full}_n &=
\begin{bmatrix}
107.4& 103.4&  99.3& 133.8\\
114.7& 107.6& 102.4&  98.0
\end{bmatrix}^\top,\\
Q^\text{partial}_n &=
\begin{bmatrix}
20.1& 21.6\\
18.9&  9.1
\end{bmatrix}^\top, \\
Q^\text{hmm}_n &=
\begin{bmatrix}
133.0& 106.0& 105.9&  99.1\\
 98.1& 111.2& 111.7& 105.4
\end{bmatrix}^\top,
\end{aligned}
\end{equation*}
}
where the $(i,j)$ elements of the $Q$ matrices are the estimates of the Q-function value, when $a = i, s = j$. Similar to the other HMM estimations (from unsupervised learning task), the labels of the inferred hidden state do not match the labels assigned to the true states. Permuting the state indices $\{1,2,3,4\}$ to $(2,3,4,1)$ in order to have better matching between the estimated and true Q-function, we compare the estimated Q-function as follows:
{\small
\begin{equation*}
\begin{aligned}
Q^\text{permuted}_n &=
\begin{bmatrix}
106.0& 105.9&  99.1& 133.0\\
111.2& 111.7& 105.4&   98.1
\end{bmatrix}^\top,\\
Q^\text{full}_n &=
\begin{bmatrix}
107.4& 103.4&  99.3& 133.8\\
114.7& 107.6& 102.4&  98.0
\end{bmatrix}^\top.
\end{aligned}
\end{equation*}
}
This permutation is consistent with the estimated observation $\mathbf{O}(\theta_n)$ as below:
{\small
\begin{equation*}
\mathbf{O}(\theta_n) =
\begin{bmatrix}
.066& .934\\
.943& .057\\
.947& .053\\
.052& .948
\end{bmatrix},
\quad
\mathbf{O}(\theta^*)=
\begin{bmatrix}
.950& .050\\
.950& .050\\
.050& .950\\
.050& .950
\end{bmatrix}.
\end{equation*}
}

\subsection{Dynamic Policy with Partial Observations}
When the model parameters of POMDP are given, the Bayesian state belief filter can be used to make decisions based on the state belief. The use of the Bayesian state belief filter has demonstrated improved performance as compared to the performance of the standard RL algorithms with partial observation~\cite{littman1995learning, karkus2017qmdp}.

After a certain stopping criterion is satisfied, we fix the parameter. The fixed POMDP parameters $(\mathbf{T}_{\theta_l}, \mathbf{O}_{\theta_l}, \mathbf{R}_{\theta_l}, \sigma_{\theta_l})$ are used in the following Bayesian state belief filter
{\small
\begin{equation}\label{eq: update u with the other policy}
     \mathbf{u}_{n+1}=\frac{\mathbf{T}^\top_{\theta_l}(a_n) \mathbf{B}(y_n;\theta_l)\mathbf{u}_{n}}{\mathbf{b}^\top(y_n;\theta_l) \mathbf{u}_{n}},
\end{equation}
}
where $\mathbf{u}_n = [u_{n,1}, u_{n,2}, \dots, u_{n,I}]^\top,$ and $u_{n,i} = P(s_n=i|y_0, y_1, \dots, y_n; \theta_l)$.

The action $a^*$ is chosen based on the expectation of the Q-function on the state belief distribution and the current observation $o_n$
{\small
\begin{equation}\label{eq: dynamic policy}
a^* = \argmax_a \sum_{i}^I Q_{\theta_l}(s=i, a)P(s_n=i|o_n, \mathbf{u}_n; \theta_l),
\end{equation}
}
where
{\small
\begin{equation*}
P(s_n=i|o_n, \mathbf{u}_n; \theta_l) = \frac{P(o_n|s_n=i;\theta_l) u_{n,i}}{\sum_{j}^I P(o_n|s_n=j;\theta_l) u_{n,j}}.
\end{equation*}
}
\begin{remark}
Similar to output feedback control with state observer, the policy in~\eqref{eq: dynamic policy} uses a state predictor to choose an action.
\end{remark}

We tested the dynamic policy consisting of~\eqref{eq: update u with the other policy} and~\eqref{eq: dynamic policy} at every thousand steps of the parameter estimation. Each test comprises 100 episodes of running the POMDP with the policy. Each episode in the test takes 500 steps. Then the mean rewards of total $100\times500$ steps are marked and compared with the policies of the Q-learning with full state observation and partial state observation~\cite{watkins1992q}. Figure~\ref{fig: mean reward} shows that the proposed HMM Q-learning performs better than the Q-learning with partial observation.
\begin{figure}[thpb]
\centering
 \includegraphics[width=0.3\textwidth]{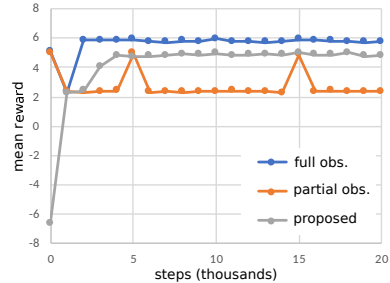}
 \caption{mean rewards from Q-learning with full observation, Q-learning with partial observation, and the proposed HMM Q-learning.}
\medskip
\label{fig: mean reward}
\end{figure}

\section{Conclusion}
We presented a model-based approach to the problem of reinforcement learning with incomplete observation. Since the controlled POMDP is an HMM, we invoked results from Hidden Markov Model (HMM) estimation. Based on the convergence of the HMM estimator, the optimal action-value function $Q^*(s, a)$ is learned despite the hidden states. The proposed algorithm is recursive, i.e. only the current sample is used so that there is no need for replay buffer, in contrast to the other algorithms for POMDP~\cite{hausknecht2015deep, heess2015memory}.

We proved the convergence of the recursive estimator using the ergodicity of the underlying Markov chain for the HMM estimation~\cite{legland1997recursive, krishnamurthy2002recursive}. The approach developed in stochastic approximation~\cite{kushner2003stochastic} is used to show the convergence of the estimators in spite of correlated data samples and asynchronous update. Also, we presented a numerical example where the simulation shows the convergent behavior of the recursive estimator.
%
%%\addtolength{\textheight}{-1cm}   % This command serves to balance the column lengths
%                                  % on the last page of the document manually. It shortens
%                                  % the textheight of the last page by a suitable amount.
%                                  % This command does not take effect until the next page
%                                  % so it should come on the page before the last. Make
%                                  % sure that you do not shorten the textheight too much.
%
%%%%%%%%%%%%%%%%%%%%%%%%%%%%%%%%%%%%%%%%%%%%%%%%%%%%%%%%%%%%%%%%%%%%%%%%%%%%%%%%%
%
%
%
%%%%%%%%%%%%%%%%%%%%%%%%%%%%%%%%%%%%%%%%%%%%%%%%%%%%%%%%%%%%%%%%%%%%%%%%
%
%
%%%%%%%%%%%%%%%%%%%%%%%%%%%%%%%%%%%%%%%%%%%%%%%%%%%%%%%%%%%%%%%%%%%%%%%%%%%%%%%%%
\bibliographystyle{IEEEtran}
\bibliography{mybibfile}

\section*{Appendix}
\subsection{Convergence of the HMM estimation}\label{Appendix A.}
The convergence result in~\cite{krishnamurthy2002recursive} is briefly stated first. Then we verify that the assumptions ({\it\textbf{C}~\ref{assumption: aperiodic-irreducible-P}, \textbf{C}~\ref{assumption: differentiable_model}, \textbf{C}~\ref{assumption: geometrically_ergodic_extended_MC}, \textbf{C}~\ref{assumption: regularity_on_the_update_function}}) from~\cite{krishnamurthy2002recursive} are satisfied for the HMM, which is the POMDP on finite state-action set excited by the behavior policy.

The assumptions for the convergence of the HMM estimator are given as follows:
%================================================================
\begin{hmm_assumption}\label{assumption: aperiodic-irreducible-P}
The transition matrix $\mathbf{P}_{\theta^*}$ of the true parameter $\theta^*$ is aperiodic and irreducible.
\end{hmm_assumption}
%================================================================
\begin{hmm_assumption}\label{assumption: differentiable_model}
The mapping for the transition matrix $\theta \rightarrow \mathbf{P}_{\theta}$ is twice differentiable with bounded first and second derivatives and Lipschitz continuous second derivative. Furthermore, for any $y_n$, the mapping $\theta \rightarrow \mathbf{b}(y_n;\theta)$ is three times differentiable; $\mathbf{b}(y_n;\theta)$ is continuous on $\mathcal{Y}:=\mathcal{O}\times\mathbb{R}\times\mathcal{A}$ for each $\theta \in \Theta$.
\end{hmm_assumption}
%================================================================
\begin{hmm_assumption}\label{assumption: geometrically_ergodic_extended_MC}
Under the probability measure corresponding to the true parameter $\mathbf{\theta^*}$, the extended Markov chain\footnote{The update laws in~\eqref{eq: update-u},~\eqref{eq: update-omega} determine the next $\mathbf{u}_{n+1}$ and $\mathbf{\omega}_{n+1}$ only using the current $\mathbf{u}_{n}$ and $\mathbf{\omega}_{n}$, so the extended chain is still Markov.}
\begin{equation*}
    \{ s_n, y_n, \mathbf{u}_n, \mathbf{\omega}_n \}
\end{equation*}
associated with $\theta \in \Theta$ is geometrically ergodic.
\end{hmm_assumption}
%================================================================

The ordinary differential equation (ODE) approach~\cite{kushner2003stochastic} for the stochastic approximation is used to prove the convergence. Rewrite~\eqref{eq: HMM_estimator} as
\begin{equation}\label{stochastic-approx-HMM}
    \theta_{n+1} = \theta_n + \epsilon_n \mathbf{S}\left(y_n, \mathbf{u}_n, \mathbf{\omega}_n ;\theta_n\right) + \epsilon_n M_n,
\end{equation}
where $M_n$ is the projection term, i.e. it is the vector of shortest Euclidean length needed to bring $\theta_n + \epsilon_n \mathbf{S}(y_n, \mathbf{u}_n, \mathbf{\omega}_n;\theta_n)$ back to the constraint set $H$, if it escapes from $H$.  The ODE approach shows that the piecewise constant interpolation over continuous time  converges to the ODE, which has an invariant set with desirable property. In our problem, the set with maximum likelihood is desired. For technical details on the ODE approaches, we refer to~\cite{kushner2003stochastic}.

Define a piece-wise constant interpolation of $\theta_n$ as follows:
\begin{equation*}
    t_n = \sum_{i=0}^{n-1} \epsilon_i, \quad t_0=0,
\end{equation*}
\begin{equation*}
    m(t)=\begin{cases}
        n; \; t_n \leq t < t_{n+1} & \text{for } \; t \geq 0\\
        0 &\text{for} \; t < 0.
    \end{cases}
\end{equation*}
Define the piece-wise constant process $\theta^0(t)$ as:
\begin{equation*}
    \theta^0(t)=\begin{cases}
        \theta_0, & \text{for } t \leq 0\\
        \theta_n, & \text{for } t_n \leq t  < t_{n+1}, \quad \text{for}\; t\geq 0.
    \end{cases}
\end{equation*}
Define the shifted sequence $\theta^{n}(\cdot)$ to analyze the asymptotic behavior:
\begin{equation*}
    \theta^n(t) = \theta^0(t_n + t), \quad \text{for} \; t\in(-\infty, \infty).
\end{equation*}
Similarly, define $M^0(\cdot)$ and $M^n(\cdot)$ by
\begin{equation*}
    M^0(t) =
    \begin{cases}
        \sum_{i=0}^{m(t)-1}\epsilon_i M_i, &\text{for}\; t\geq0\\
        0, &\text{for} \; t<0,
    \end{cases}
\end{equation*}
and
\begin{equation*}
    M^n(t) =
    \begin{cases}
        M^0(t_n+t) - M^0(t), &\text{for} \; t\geq 0\\
        -\sum_{i=m(t_n+t)}^{n-1} \epsilon_i M_i, &\text{for} \; t<0.
    \end{cases}
\end{equation*}
The ODE approach aims to show the convergence of the piece-wise constant interpolation to the following projected ODE:
\begin{equation}\label{eq:ODE}
    \dot{\theta} = \mathbf{H}(\theta) + \tilde{m}, \quad \theta(0) = \theta_0,
\end{equation}
where $\mathbf{H}(\theta) = E\mathbf{S}(y_n, \mathbf{u}_n, \mathbf{\omega}_n;\theta)$, and $\tilde{m}(\cdot)$ is the projection term to keep $\theta$ in $H$. Here, the expectation $E$ is taken with respect to  the stationary distribution corresponding to the true parameter $\theta^*$.
Define the following set of points along the trajectories:
\begin{equation*}
    L_{H} = \{ \theta;\; \theta \; \text{be a limit point of \eqref{eq:ODE}}, \; \theta_0 \in H\},
\end{equation*}
\begin{equation*}
    \hat{L}_{H}=\{\theta\in G_1;\; \mathbf{H}(\theta)+\tilde{m}=0\},
\end{equation*}
\begin{equation*}
    L_{ML} = \{\argmax \, l(\theta)\},
\end{equation*}
where $l(\theta)$ is the likelihood calculated with respect to the  stationary distribution corresponding to the true parameter $\theta^*$.

\begin{hmm_assumption}[see A2 in~\cite{krishnamurthy2002recursive}]\label{assumption: regularity_on_the_update_function}
For each $\theta \in \Theta$, $\{\mathbf{S}(y_n, \mathbf{u}_n, \mathbf{\omega}_n;\theta)\}$ is uniformly integrable, $E[\mathbf{S}(y_n, \mathbf{u}_n, \mathbf{\omega}_n;\theta)]= \mathbf{H}(\theta)$, $\mathbf{H}(\cdot)$ is continuous, and $\mathbf{S}(y_n, \mathbf{u}_n, \mathbf{\omega}_n;\theta)$ is continuous for each $(y_n, \mathbf{u}_n, \mathbf{\omega}_n)$. There exist nonnegative measurable functions $\tilde{\rho}(\cdot)$ and $\hat{\rho}(\cdot)$, such that $\tilde{\rho}(\cdot)$ is bounded on bounded $\theta$ set, and
\begin{equation*}
    |\mathbf{S}(y_n, \mathbf{u}_n, \mathbf{\omega}_n;\theta) - \mathbf{S}(y_n, \mathbf{u}_n, \mathbf{\omega}_n;\phi)| \geq \tilde{\rho}(\theta - \phi)\hat{\rho}(y_n, \mathbf{u}_n, \mathbf{\omega}_n),
\end{equation*}
such that $\tilde{\rho}(\theta) \rightarrow 0$ as $\phi \rightarrow 0$, and
{\small
\begin{equation*}
    P\left( \limsup_n \sum_{i=n}^{m(t_n+s)}\epsilon_i \hat{\rho}(y_i, \mathbf{u}_i, \mathbf{\omega}_i) < \infty \right) = 1, \, \text{ for some } s>0.
\end{equation*}
}
\end{hmm_assumption}
%\begin{assumption}\label{assumption:nice_limit_set}
%Suppose that $L^1_G$ is a subset of $L_G$ and $L_{ML}$ is locally asymptotically stable. For any initial condition $\theta_0 \notin L^1_G$, the trajectories of the ODE in~\eqref{eq:ODE} goes to $L_{ML}$.
%\end{assumption}
\vspace{5mm}
\begin{theorem}[see Theorem 3.4 in~\cite{krishnamurthy2002recursive}]\label{thm: hmm_estimator} Assume {\it \textbf{C}~\ref{assumption: aperiodic-irreducible-P}, \textbf{C}~\ref{assumption: differentiable_model}, \textbf{C}~\ref{assumption: geometrically_ergodic_extended_MC}, and \textbf{C}~\ref{assumption: regularity_on_the_update_function}} hold. There is a null set $\tilde{N}$, such that for all $\omega\notin \tilde{N}$, $\{\theta^n(\omega, \cdot), M^n(\omega, \cdot)\}$ is equicontinuous (in the extended sense as in~\cite[p.~102]{kushner2003stochastic}). Let $(\theta(\omega, \cdot), M(\omega, \cdot))$ denote the limit of some convergent subsequence. Then the pair satisfies the projected ODE~\eqref{eq:ODE}, and ${\theta_n}$ converges to an invariant set of the ODE in $H$.
%Further suppose Assumption~\ref{assumption:nice_limit_set}. Then the limit points are in $L^1_{G_1} \cup $
\end{theorem}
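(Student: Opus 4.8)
The plan is to invoke the ordinary differential equation (ODE) method for constrained (projected) stochastic approximation in the form developed in~\cite{kushner2003stochastic}, adapting it to the Markov-modulated noise that arises because the samples $y_n$, the belief $\mathbf{u}_n$, and its Jacobian $\mathbf{\omega}_n$ are all generated by the controlled POMDP rather than being i.i.d. Starting from the recursion~\eqref{stochastic-approx-HMM}, the first step is to split the driving term as
\begin{equation*}
\mathbf{S}(y_n, \mathbf{u}_n, \mathbf{\omega}_n; \theta_n) = \mathbf{H}(\theta_n) + \xi_n,
\end{equation*}
where $\mathbf{H}(\theta) = E[\mathbf{S}(y_n, \mathbf{u}_n, \mathbf{\omega}_n; \theta)]$ is the mean field under the invariant distribution of the extended chain and $\xi_n$ is the residual noise. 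I would then further decompose $\xi_n$ into a martingale-difference part $\mathbf{S} - E[\mathbf{S}\mid\mathcal{F}_n]$ and a Markov bias part $E[\mathbf{S}\mid\mathcal{F}_n] - \mathbf{H}(\theta_n)$, so that each piece can be controlled separately.

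The second step is to establish the equicontinuity claim. Because $\epsilon_n \to 0$ (so that $\theta_n$ varies slowly) while $\sum_n \epsilon_n = \infty$ (so that the interpolation time $t_n$ extends to infinity), and because by \textbf{C}~\ref{assumption: regularity_on_the_update_function} the family $\{\mathbf{S}(y_n, \mathbf{u}_n, \mathbf{\omega}_n; \theta)\}$ is uniformly integrable with $\mathbf{H}(\cdot)$ continuous, the interpolated increments of $\theta^n(\cdot)$ over any bounded time window are uniformly small; an Arzel\`a--Ascoli argument in the extended sense of~\cite[p.~102]{kushner2003stochastic} then yields equicontinuity of $\{\theta^n(\cdot), M^n(\cdot)\}$ off a null set $\tilde{N}$ and the existence of a continuous limit $(\theta(\cdot), M(\cdot))$ along subsequences. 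Identifying this limit with the projected ODE~\eqref{eq:ODE} then amounts to showing that the accumulated noise $\sum_{i=n}^{m(t_n+t)-1} \epsilon_i \xi_i$ vanishes as $n\to\infty$ and that $M^n(\cdot)$ converges to the minimal reflection term $\tilde{m}$ that keeps the trajectory in $H$, the latter being the standard characterization of the projection in~\cite{kushner2003stochastic}.

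The hard part will be the noise-averaging, i.e. showing that the Markov-modulated residual $\xi_n$ contributes nothing in the limit despite the strong correlations among the $y_n$, since $\mathbf{u}_n$ and $\mathbf{\omega}_n$ depend on the entire history through the slowly varying iterate $\theta_n$. The key leverage is the geometric ergodicity of the extended chain $\{s_n, y_n, \mathbf{u}_n, \mathbf{\omega}_n\}$ from \textbf{C}~\ref{assumption: geometrically_ergodic_extended_MC}: because $\epsilon_n \to 0$, the iterate is essentially frozen over the windows on which the chain mixes, so a law of large numbers for the geometrically mixing chain at the frozen parameter forces the time-averages of $\mathbf{S}$ to converge to $\mathbf{H}(\theta)$, while the mixing rate kills the martingale-difference part; the domination and regularity estimates in \textbf{C}~\ref{assumption: regularity_on_the_update_function} supply the uniform control that makes this freezing rigorous. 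Once the noise averaging is in hand, the conclusion is routine: the limit trajectory solves~\eqref{eq:ODE}, and the general convergence theorem for projected stochastic approximation~\cite{kushner2003stochastic} gives that $\theta_n$ converges almost surely to an invariant set of the ODE in $H$. This is precisely the architecture of Theorem~3.4 in~\cite{krishnamurthy2002recursive}, so in practice the proof reduces to verifying \textbf{C}~\ref{assumption: aperiodic-irreducible-P}--\ref{assumption: regularity_on_the_update_function} for our POMDP-induced HMM, which I would carry out in the appendix.
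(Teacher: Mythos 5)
Your proposal is correct and is essentially the paper's own approach: the paper does not reprove this statement but imports it as Theorem~3.4 of~\cite{krishnamurthy2002recursive}, whose proof has exactly the architecture you sketch (mean-field/noise decomposition, equicontinuity in the extended sense via an Arzel\`a--Ascoli argument, averaging of the Markov-modulated noise using the geometric ergodicity in \textbf{C}~\ref{assumption: geometrically_ergodic_extended_MC} together with the frozen-parameter argument, and identification of the limit with the projected ODE~\eqref{eq:ODE}). Your closing reduction --- that in practice one verifies \textbf{C}~\ref{assumption: aperiodic-irreducible-P}--\textbf{C}~\ref{assumption: regularity_on_the_update_function} for the POMDP-induced HMM and then invokes the cited theorem --- is precisely what the paper does in Appendix~\ref{Appendix A.}, using Assumptions~\ref{assumption:nice_behavioral_policy} and~\ref{assumption:positive observation probability} together with Lemmas~\ref{lemma: first bound} and~\ref{lemma: second bound}.
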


We verify that the assumptions in Theorem~\ref{thm: hmm_estimator} are satisfied with the HMM.  First, we make an assumption on the behavior policy.
% \begin{assumption}\label{assumption:nice_behavioral_policy}
% The transition probability matrix $\mathbf{P}$ determined by the transition $\mathbf{T}$, the observation $\mathbf{O}$, and the behavior policy $\mu(o)$ is aperiodic and irreducible. Furthermore, the behavior policy has strictly positive probability to choose the action for all action choices.
% \end{assumption}

% \begin{assumption}\label{assumption:positive observation probability}
% The observation probability matrix $\mathbf{O}$ is positive, i.e. $\mathbf{O}_{i,j} > 0$.
% \end{assumption}

{\it \textbf{Assumption}~\ref{assumption:nice_behavioral_policy}} is sufficient for {\it\textbf{C}~\ref{assumption: aperiodic-irreducible-P}}.

We verify {\it\textbf{C}~\ref{assumption: differentiable_model}} as follows. The first part of the assumption depends on the parametrization of the transition $\mathbf{P}_\theta$. The exponential parametrization (or called Softmax function) for $\mathbf{P}_\theta$ is a smooth function of the parameter $\theta$. So, $\mathbf{P}_\theta$ is twice differentiable with bounded first and second derivatives and Lipschitz continuous second derivative. For the HMM model in this paper, $\mathbf{b}(y_n;\theta)$ defined in~\eqref{eq: Output-likelihood} is a vector of density functions of normal distribution multiplied by conditional probabilities, i.e. $b_i(y_n) = P(o_n|s_n=i; \theta)P(a_n|s_n=i; \theta)p(r_n,|s_n=i, a_n; \theta)$. Since the density model is given by normal distribution, it is easy to see that $\mathbf{b}(y_n;\theta)$  is three times differentiable, and the $\mathbf{b}(y_n;\theta)$ is continuous on $\mathcal{O}\times\mathbb{R}\times\mathcal{A}$ with Euclidean metric.

{\it \textbf{C}~\ref{assumption: geometrically_ergodic_extended_MC}} states the geometric ergodicity of the extended Markov chain $\{ s_n, y_n, \hat{p}_n, \omega_n \}$. A sufficient condition for the ergodicity of the extended Markov chain is that  {\it\textbf{C}~\ref{assumption: aperiodic-irreducible-P}} holds, and the following $\Delta_2^{(0)}, \Delta_4^{(0)}$ are finite~(see Remark 2.6 in~\cite{krishnamurthy2002recursive}):
{\small
\begin{equation}\label{eq: integration_bounds}
\begin{aligned}
    \delta^{(s)}(y)      &= \sup_{\theta\in\Theta} \; \max_{k_1, \dots, k_s \in \{1, ..., L\}} \frac{\max_{i\in\mathcal{S}}|\partial^s_{k_1, \dots, k_s} b_i(y;\theta)|}{\min_{j\in\mathcal{S} }b_j(y;\theta)}, \\
    \Delta^{(s)}_\iota &= \sup_{\theta\in\Theta} \max_{i\in \mathcal{S}}\int_{\mathcal{Y}} \left[\delta^{(s)}(y)\right]^\iota b_i(y;\theta) dy, \\
    \Gamma_\iota      &= \sup_{\theta\in\Theta} \max_{i\in \mathcal{S}}\int_{\mathcal{Y}} \left[\max_{j\in\mathcal{S}}|\log b_j(y;\theta)|\right]^\iota b_i(y;\theta^*) dy, \\
    \bar{Y}_\iota        &= \sup_{\theta\in\Theta} \max_{i\in \mathcal{S}}\int_{\mathcal{Y}} |r|^\iota b_i(y;\theta) dy.
\end{aligned}
\end{equation}
}
To this end, we compute the bound on $\Delta_2^{(0)}, \Delta_4^{(0)}$ in the following lemma.
\begin{lemma}\label{lemma: first bound}
$\Delta_2^{(0)}$ and $\Delta_4^{(0)}$ are finite.
\end{lemma}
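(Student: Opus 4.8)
The plan is to reduce the claim to a single scalar Gaussian-tail estimate, exploiting the fact that $\Delta^{(0)}_\iota$ involves no derivatives (the superscript $s=0$), so only the ratio $\delta^{(0)}(y)=\sup_\theta \max_i b_i(y;\theta)/\min_j b_j(y;\theta)$ must be controlled. First I would write out the observation likelihood explicitly. For $y=(o,a,r)$ with $o\in\mathcal{O}$, $a\in\mathcal{A}$ ranging over finite sets and $r\in\mathbb{R}$,
\[
b_i(y;\theta)=P(o\mid s=i;\theta)\,\mu(a\mid o)\,\frac{1}{\sqrt{2\pi}\,\sigma_\theta}\exp\!\left(-\frac{(r-r_\theta(i,a))^2}{2\sigma_\theta^2}\right),
\]
where $r_\theta(i,a)$ is the parametrized mean reward. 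Thus, as a function of $r$, each $b_i$ is a Gaussian density with state-dependent mean $r_\theta(i,a)$ and common variance $\sigma_\theta^2$, multiplied by the state-independent factors $\mu(a\mid o)$ and $1/(\sqrt{2\pi}\sigma_\theta)$.

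Next I would form the ratio defining $\delta^{(0)}(y)$. For fixed $(o,a)$ the common factors $\mu(a\mid o)$ and $1/(\sqrt{2\pi}\sigma_\theta)$ cancel, leaving a ratio of $P(o\mid s=\cdot;\theta)$-weighted Gaussian exponentials. Using the elementary identity
\[
\frac{\exp(-(r-\mu_1)^2/2\sigma^2)}{\exp(-(r-\mu_2)^2/2\sigma^2)}=\exp\!\left(\frac{(\mu_1-\mu_2)(2r-\mu_1-\mu_2)}{2\sigma^2}\right),
\]
the exponential part of the ratio is \emph{linear} in $r$ in the exponent, hence the whole ratio grows only like $e^{c|r|}$ rather than faster. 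I would then invoke compactness: since the projection $\Pi_H$ confines every iterate to the compact constraint set $H$, it suffices to take the supremum over $\theta\in H$. On $H$ the continuous maps $\theta\mapsto P(o\mid s=i;\theta)$ (strictly positive by the softmax form together with Assumption~\ref{assumption:positive observation probability}), $\theta\mapsto\sigma_\theta$ (bounded away from $0$ and $\infty$), and $\theta\mapsto r_\theta(i,a)$ are all uniformly bounded, with the observation probabilities bounded away from $0$. Consequently there exist constants $C,c>0$, independent of $\theta\in H$ and of the finitely many $(o,a,i,j)$, such that $\delta^{(0)}(y)\le C\,e^{c|r|}$.

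Finally I would substitute this bound into the definition of $\Delta^{(0)}_\iota$. For each $i$,
\[
[\delta^{(0)}(y)]^\iota\,b_i(y;\theta)\le C^\iota e^{\iota c|r|}\cdot\frac{1}{\sqrt{2\pi}\sigma_\theta}\exp\!\left(-\frac{(r-r_\theta(i,a))^2}{2\sigma_\theta^2}\right),
\]
up to the bounded factors $P(o\mid s=i;\theta)\mu(a\mid o)$, and $\int_{\mathbb{R}}e^{\iota c|r|}\,\exp(-(r-r_\theta(i,a))^2/2\sigma_\theta^2)\,dr<\infty$ because the quadratic Gaussian decay dominates the linear growth in the exponent for every fixed $\iota$, in particular for $\iota=2$ and $\iota=4$. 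Summing the finitely many terms over $o\in\mathcal{O}$ and $a\in\mathcal{A}$ and taking the supremum over the compact set $H$ then yields $\Delta^{(0)}_2<\infty$ and $\Delta^{(0)}_4<\infty$.

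The hard part is not the finiteness of any individual integral but the \emph{uniformity} of the estimate over $\theta$: under the softmax parametrization $P(o\mid s=i;\theta)$ and $\sigma_\theta$ can approach the boundary value $0$, which would make $\min_j b_j(y;\theta)$ vanish and blow up $\delta^{(0)}(y)$, or degenerate the Gaussian. This is precisely where the restriction to the compact constraint set $H$ (rather than all of $\Theta$) is essential, and managing that uniformity is the crux of the argument.
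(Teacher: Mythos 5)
Your proof follows essentially the same route as the paper's: write $b_i(y;\theta)$ as an observation-probability factor times a Gaussian density, observe that the ratio of two equal-variance Gaussians is the exponential of a term linear in $r$, so that $\delta^{(0)}(y)\le \gamma_0 e^{\gamma_1|r|+\gamma_2}$ (using strict positivity of the entries of $\mathbf{O}_\theta$), and then conclude by noting that integrating $e^{\iota\gamma_1|r|}$ against a Gaussian density is finite, which covers $\iota=2$ and $\iota=4$. Your closing point about uniformity in $\theta$ --- that the supremum must effectively be taken over a compact parameter set on which $\sigma_\theta$ and the observation probabilities stay away from their degenerate values --- is handled only implicitly in the paper, where the constants $\gamma_0,\gamma_1,\gamma_2$ are simply asserted to exist uniformly over $\Theta$, so your treatment of that step is, if anything, slightly more careful.
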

\begin{proof}
We need to show that the following expressions are bounded:
{\small
\begin{equation}
\begin{aligned}
    \delta^{(0)}(y)      &= \sup_{\theta\in\Theta} \; \frac{\max_{i\in\mathcal{S}}b_i(y;\theta)}{\min_{j\in\mathcal{S} }b_j(y;\theta)}, \\
    \Delta^{(0)}_2 &= \sup_{\theta\in\Theta} \max_{i\in \mathcal{S}}\int_{\mathcal{Y}} \left[\delta^{(s)}(y)\right]^2 b_i(y;\theta) dy, \\
    \Delta^{(0)}_4 &= \sup_{\theta\in\Theta} \max_{i\in \mathcal{S}}\int_{\mathcal{Y}} \left[\delta^{(s)}(y)\right]^4 b_i(y;\theta) dy,
\end{aligned}
\end{equation}
}
for the $b_i(y;\theta)$ given by
{\small
\begin{align*}
b_i(y;\theta) &= p(y|s=i ;\theta) \\
              &= P(o|s=i; \theta)P(a|s=i; \theta)p(r,|s=i, a; \theta) \\
              &= \mathbf{O}_{i,o} \, \mu(o)\frac{1}{\sqrt{2\pi\sigma_\theta^2}}\exp\left({-\frac{(r - \mathbf{R}_{a, i})^2}{2 \sigma_\theta^2}}\right),
\end{align*}
}
where $\mathbf{O}_{i,o}:=\frac{\exp(o_{i,o})}{\sum_{j'=1}^{J} \exp(o_{i,j'})}$, $o_{i,j}$ is the $(i,j)$\textsuperscript{th} element of $\mathbf{O}_\theta$, and $\mathbf{R}_{a,i}$ is the $(a,i)$\textsuperscript{th} element of $\mathbf{R}_\theta$.

The following bounds hold for some $\gamma_0, \gamma_1, \gamma_2 > 0$, since the elements in the probability matrix $\mathbf{O}_\theta$ are strictly positive, and the values of  $\mathbf{R}_\theta$ verify
{\small
\begin{align*}
    &\frac{b_i(y;\theta)}{b_j(y;\theta)}\\
    &=\frac{\mathbf{O}_{i,o}}{\mathbf{O}_{j,o}}\exp\left(\frac{-(r - \mathbf{R}_{a, i})^2 + (r - \mathbf{R}_{a, j})^2}{2\sigma_\theta^2}\right)\\
    %&\leq
    %\frac{1}{\min_{j'}\mathbf{O}_{j',o}}\exp\left({\frac{r(\mathbf{R}_{a, i} - \mathbf{R}_{a, j} ) - \frac{1}{2}(\mathbf{R}_{a, i} - \mathbf{R}_{a, j})(\mathbf{R}_{a, i} - \mathbf{R}_{a, j})}{2\sigma_\theta^2}}\right) \\
    &\leq
    \frac{1}{\min_{j'}\mathbf{O}_{j',o}}\exp\left({\frac{\max_{i,j}|\mathbf{R}_{a, i} - \mathbf{R}_{a, j}|\left(|r| + \max_{i'}\mathbf{R}_{a, i'}\right)}{2\sigma_\theta^2}}\right) \\
    &\leq
    \gamma_0 \exp(\gamma_1 |r| + \gamma_2).
\end{align*}
}
Hence, $\delta^{(0)}(y)<\infty$ for a fixed $y = (o,r,a)$.

Calculating $\Delta^{(0)}_\iota$ for $\iota\geq1$, we have
{\small
\begin{equation*}
\begin{aligned}
\Delta^{(0)}_\iota
&= \sup_{\theta\in\Theta} \max_{i\in \mathcal{S}}\int_{\mathcal{Y}} \left[\delta^{(s)}(y)\right]^\iota b_i(y;\theta) dy \\
&\leq \sup_{\theta\in\Theta} \max_{i, a}\int_{\mathbb{R}} \gamma_0^\iota \exp(\iota\gamma_1 |r| + \iota\gamma_2) \left(\frac{\exp\left({-\frac{(r - \mathbf{R}_{a, i})^2}{2 \sigma_\theta^2}}\right)}{\sqrt{2\pi\sigma_\theta^2}}\right) dr \\
& \leq \sup_{\theta\in\Theta} \max_{i, a}\int_{-\infty}^0 \gamma_3\exp(-\gamma_4(r - \lambda_{i,a})^2)dr \\
 &\quad + \sup_{\theta'\in\Theta} \max_{i', a'}\int_{0}^{+\infty} \gamma_3\exp(-\gamma_5(r - \lambda_{i',a'})^2)dr,
\end{aligned}
\end{equation*}
}
where $\gamma_3, \gamma_4, \gamma_5>0$ and  $\lambda_{i,a}$ are calculated by simplifying the terms.
For all $\theta\in\Theta$, $(i,a)\in\mathcal{S}\times\mathcal{A}$, we have
\begin{equation*}
\int_\mathbb{R} \gamma_3\exp(-\gamma_4(r - \lambda_{i,a})^2)dr < \infty,
\end{equation*}
since  the integrand is given in the  form of normal distribution. Hence $\Delta^{(0)}_\iota < \infty$ for $\iota\geq1$.
\end{proof}

To verify uniform integrability and Lipschitz continuity in \it{\textbf{C}~\ref{assumption: regularity_on_the_update_function}}, a sufficient condition is that $\Delta_\iota^{(1)}$, $\Gamma_2$, and $\bar{Y}_2$ are finite for all $\iota\geq0$ (see Remark 3.1 in~\cite{krishnamurthy2002recursive}). Next lemma proves that result.
\begin{lemma}\label{lemma: second bound}
$\Delta_2^{(1)}$, $\Gamma_2$, and $\bar{Y}_2$ are finite.
\end{lemma}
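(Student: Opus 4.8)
The plan is to mirror the strategy of \emph{Lemma~\ref{lemma: first bound}}: reduce each of the three quantities to a one-dimensional Gaussian integral in the reward variable $r$ (the observation $o$ and action $a$ being summed over finite sets), and then exploit the fact that the Gaussian density appearing in $b_i(y;\theta)$ decays fast enough to dominate any polynomial or moderate exponential growth of the integrand. Throughout I would use that on the working parameter set the variance $\sigma_\theta$ is bounded away from $0$ and above, the reward means $\mathbf{R}_{a,i}$ are bounded, and the entries $\mathbf{O}_{i,o}$ and behavior probabilities $\mu(a|o)$ are bounded away from $0$; these uniform bounds are exactly what make $\sup_{\theta}$ finite, and they follow from compactness of the constraint set $H$ (cf.\ the remark on confining $\Theta$ to a compact set).

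I would dispatch $\bar{Y}_2$ first, as it is cleanest. Since integration over $\mathcal{Y}$ sums the finite weights $\mathbf{O}_{i,o}\mu(a|o)$ (which total $1$) against the Gaussian in $r$, the inner integral $\int_{\mathbb{R}} r^2\,\mathcal{N}(r;\mathbf{R}_{a,i},\sigma_\theta^2)\,dr$ is simply the second moment $\mathbf{R}_{a,i}^2+\sigma_\theta^2$. Maximizing over $i,a$ and taking the supremum over $\theta$ yields $\bar{Y}_2 \le R_{\max}^2+\sigma_{\max}^2<\infty$. For $\Gamma_2$ I would note that
\[
\log b_j(y;\theta)=\log\mathbf{O}_{j,o}+\log\mu(a|o)-\tfrac12\log(2\pi\sigma_\theta^2)-\tfrac{(r-\mathbf{R}_{a,j})^2}{2\sigma_\theta^2},
\]
so that $\max_j|\log b_j(y;\theta)|\le C_1+C_2 r^2$ with $C_1,C_2$ uniform over the compact parameter set. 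Squaring gives a quartic in $r$, and since the integration in $\Gamma_2$ is against the \emph{fixed} density $b_i(y;\theta^*)$, the integral is a finite combination of Gaussian moments up to order four, hence finite; the $\sup_\theta$ survives because $C_1,C_2$ are uniform.

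The delicate case is $\Delta_2^{(1)}$, and I expect it to be the main obstacle. First I would differentiate $b_i$ in each coordinate of $\theta$: differentiating in a softmax logit of $\mathbf{O}$ leaves a bounded prefactor, differentiating in a mean $\mathbf{R}_{a,i}$ produces the factor $(r-\mathbf{R}_{a,i})/\sigma_\theta^2$ (linear in $r$), and differentiating in $\sigma_\theta$ produces $-1/\sigma_\theta+(r-\mathbf{R}_{a,i})^2/\sigma_\theta^3$ (quadratic in $r$); in every case $\partial_k b_i(y;\theta)=b_i(y;\theta)\,P_k(r)$ with $P_k$ a polynomial of degree $\le 2$ whose coefficients are uniformly bounded on the compact set. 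Hence $\max_i|\partial_k b_i|\le P(r)\max_i b_i$, and dividing by $\min_j b_j$ gives $\delta^{(1)}(y)\le P(r)\,\delta^{(0)}(y)\le P(r)\gamma_0\exp(\gamma_1|r|+\gamma_2)$ by the bound on $\delta^{(0)}$ from \emph{Lemma~\ref{lemma: first bound}}.

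Squaring and integrating against $b_i(y;\theta)$ then reduces, exactly as in \emph{Lemma~\ref{lemma: first bound}}, to integrals of the form $\int_{\mathbb{R}}P(r)^2\exp(2\gamma_1|r|)\,\mathcal{N}(r;\mathbf{R}_{a,i},\sigma_\theta^2)\,dr$; the Gaussian factor $\exp(-(r-\mathbf{R}_{a,i})^2/2\sigma_\theta^2)$ dominates the polynomial-times-exponential growth, so each such integral converges. The crux is verifying this domination uniformly in $\theta$ near the boundary where $\sigma_\theta$ is smallest, since that is where both the derivative prefactor $1/\sigma_\theta^3$ and the growth rate of $\delta^{(0)}$ are largest; keeping $\sigma_\theta\ge\sigma_{\min}>0$ on the compact working set is what guarantees the tail integral stays bounded, and hence $\Delta_2^{(1)}<\infty$. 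Combining the three bounds establishes the lemma.
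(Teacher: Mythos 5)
Your proposal is correct and follows essentially the same route as the paper's proof: coordinate-wise differentiation of $b_i$ showing $\partial_{\theta^{(l)}} b_i = b_i \cdot P_l(r)$ with $P_l$ of degree at most $2$, the ratio bound $\delta^{(0)}(y)\leq \gamma_0\exp(\gamma_1|r|+\gamma_2)$ inherited from \emph{Lemma}~1, the quadratic-in-$r$ bound on $|\log b_j|$ for $\Gamma_2$, and Gaussian moment finiteness for all three quantities. Your explicit attention to uniformity in $\theta$ (in particular $\sigma_\theta\geq\sigma_{\min}>0$ on the compact constraint set) makes precise a point the paper leaves implicit, but the argument is the same.
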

\begin{proof}
First, we need to show that $\Delta_2^{(1)}$, given by
{\small
\begin{equation*}
\Delta^{(1)}_2 = \sup_{\theta\in\Theta} \max_{i\in \mathcal{S}}\int_{\mathcal{Y}} \left[\delta^{(1)}(y)\right]^2 b_i(y;\theta) dy,
\end{equation*}
where
\begin{equation*}
\delta^{(1)}(y) = \sup_{\theta\in\Theta} \; \max_{l \in \{1, ..., L\}} \frac{\max_{i\in\mathcal{S}}|\partial_{\theta^{(l)}} b_i(y;\theta)|}{\min_{j\in\mathcal{S} }b_j(y;\theta)}
\end{equation*}
} is bounded.
Calculating $\frac{|\partial_{\theta^{(l)}} b_i(y;\theta)|}{b_j(y;\theta)}$ for each $\theta^{(l)}\in\{o_{i,j}, \mathbf{R}_{a,i}, \sigma_\theta \}$, we have:
{\small
\begin{equation*}
\begin{aligned}
    \frac{|\partial_{\mathbf{o}_{i,j}} b_i(y;\theta)|}{b_j(y;\theta)}&=\begin{cases}
        (1-\mathbf{O}_{i,j}) \frac{|b_i(y;\theta)|}{b_j(y;\theta)}, & \text{for } j = o \\
        \mathbf{O}_{i,j} \frac{|b_i(y;\theta)|}{b_j(y;\theta)}, & \text{for } j \neq o,
    \end{cases}\\
\frac{|\partial_{\mathbf{R}_{a,i}} b_i(y;\theta)|}{b_j(y;\theta)} &=\frac{(r - \mathbf{R}_{a, i})}{\sigma_\theta^2}\frac{|b_i(y;\theta)|}{b_j(y;\theta)}, \\
\frac{|\partial_{\sigma_\theta} b_i(y;\theta)|}{b_j(y;\theta)} &=-\left(\frac{2(r - \mathbf{R}_{a, i})^2+\sigma_\theta^2}{\sigma_\theta^3}\right)\frac{|b_i(y;\theta)|}{b_j(y;\theta)}.
\end{aligned}
\end{equation*}
}
In the proof of Lemma~\ref{lemma: first bound}, we showed that $\frac{|b_i(y;\theta)|}{b_j(y;\theta)}\leq\gamma_0 \exp(\gamma_1 |r| + \gamma_2)$. Using integration by parts, it is easy to verify that $\int_\mathbb{R}r^\iota\exp(-r^2)dr < \infty$ for $\iota\in\{1,2,\dots\}$. Using the calculated bounds, it is straightforward to show that $\Delta^{(1)}_2<\infty$.

Secondly, we need to show that $\Gamma_2$, given by
\begin{equation*}
\Gamma_2 = \sup_{\theta\in\Theta} \max_{i\in \mathcal{S}}\int_{\mathcal{Y}} \left[\max_{j\in\mathcal{S}}|\log b_j(y;\theta)|\right]^2 b_i(y;\theta^*) dy
\end{equation*}
is bounded. Indeed, its boundedness follows from the fact that
\begin{equation*}
|\log b_j(y;\theta)| \leq (r - \mathbf{R}_{a, i})^2 + \gamma
\end{equation*}
holds for some constant $\gamma > 0$, and $\int_\mathbb{R}r^\iota\exp(-r^2)dr < \infty$ for $\iota\in\{1,2,\dots\}$, $\Gamma_2 < \infty$.

Lastly, $\bar{Y}_2$, given by
\begin{equation*}
\bar{Y}_2 = \sup_{\theta\in\Theta} \max_{i\in \mathcal{S}}\int_{\mathcal{Y}} |r|^2 b_i(y;\theta) dy
\end{equation*}
 is bounded, since $\int_\mathbb{R}r^\iota\exp(-r^2)dr < \infty$ for $\iota\in\{1,2,\dots\}$.
\end{proof}

Now, we have verified {\it\textbf{C}~\ref{assumption: aperiodic-irreducible-P}, \textbf{C}~\ref{assumption: differentiable_model}, \textbf{C}~\ref{assumption: geometrically_ergodic_extended_MC},} and {\it\textbf{C}~\ref{assumption: regularity_on_the_update_function}} for {\it\textbf{Theorem}~\ref{thm: hmm_estimator}}, which states the convergence of $\theta_n$ to an invariant set.
\QEDA

\subsection{Convergence of the Q-function Estimation with the HMM State Predictor}\label{Appendix B.}
We invoke the convergence result for asynchronous update stochastic approximation algorithm from~\cite{kushner2003stochastic}. 

\subsubsection{Preliminaries}
For $\alpha = 1, \dots, r$, let
\begin{equation*}
    \theta^\epsilon_{n+1, \alpha} = \Pi_{[a_\alpha, b_\alpha]}\left[ \theta^\epsilon_{n, \alpha} + \epsilon Y^\epsilon_{n,\alpha} \right]
    = \theta^\epsilon_{n,\alpha} + \epsilon Y^\epsilon_{n,\alpha} + \epsilon Z^\epsilon_{n, \alpha}
\end{equation*}
define the \emph{scaled interpolated real-time} $\tau^\epsilon_{n,\alpha}$ as follows:
\begin{equation*}
    \tau^\epsilon_{n,\alpha} = \epsilon \sum_{i=0}^{n-1} \delta \tau^{\epsilon}_{i,\alpha},
\end{equation*}
where $\delta \tau^\epsilon_{n, \alpha}$ denotes the real-time between the $n^{\rm th}$ and the $(n+1)^{\rm th}$ update of the $\alpha^{\rm th}$ component of $\theta$. Let $\theta^\epsilon_\alpha(\cdot)$ denote the interpolation of $\{\theta^\epsilon_{n,\alpha}, n<\infty\}$ on $[0, \infty)$, defined by
\begin{equation*}
    \begin{aligned}
        \theta^\epsilon_\alpha &= \theta^\epsilon_{n, \alpha} \quad \text{on} \quad [n\epsilon, n\epsilon+\epsilon),\\
        \tau^\epsilon_\alpha   &= \tau^\epsilon_{n, \alpha} \quad \text{on} \quad [n\epsilon, n\epsilon+\epsilon).
    \end{aligned}
\end{equation*}
Define the \emph{real-time} interpolation $\hat{\theta}_\alpha(t)$ by
\begin{equation*}
    \hat{\theta}^\epsilon_\alpha(t)=\theta^\epsilon_{n,\alpha}, \quad t\in[\tau^\epsilon_\alpha, \tau^\epsilon_{n+1, \alpha}).
\end{equation*}

\begin{asynch_assumption}\label{assumption:asynch_unifrom_integrability_of_Y}
$\{Y^\epsilon_{n,\alpha}, \delta\tau^\epsilon_{n,\alpha}; \epsilon, \alpha, n\}$ is uniformly integrable.
\end{asynch_assumption}

\begin{asynch_assumption}\label{assumption:asynch_E_Y}
There are real-valued functions $g^\epsilon_{n,\alpha}(\cdot)$  are continuous, uniformly in $n$,  $\epsilon$ and random variables $\beta^\epsilon_{n,\alpha}$, such that
\begin{equation}
    E^\epsilon_{n,\alpha} Y^\epsilon_{n,\alpha}=g^\epsilon_{n,\alpha}(\hat{\theta}^\epsilon(\tau^{\epsilon,-}_{n+1,\alpha}), \xi^\epsilon_{n,\alpha})+\beta^\epsilon_{n,\alpha},
\end{equation}
where
\begin{equation*}
    \{\beta^\epsilon_{n,\alpha};n,\epsilon,\alpha\}\;\text{is uniformly integrable}.
\end{equation*}
\end{asynch_assumption}

\begin{asynch_assumption}\label{assumption:asynch_E_beta=0}
$\lim_{m,n,\epsilon}\frac{1}{m}\sum^{n+m-1}_{i=n}E^\epsilon_{n,\alpha}\beta^\epsilon_{i,\alpha}=0$ in mean.
\end{asynch_assumption}

\begin{asynch_assumption}\label{assumption:asynch_E_delta_tau}
There are strictly positive measurable functions $u^\epsilon_{n,\alpha}(\cdot)$, such that
\begin{equation}
    E^{\epsilon,+}_{n,\alpha}\delta\tau^\epsilon_{n+1,\alpha} = u^\epsilon_{n+1,\alpha}(\hat{\theta}^\epsilon(\tau^\epsilon_{n,\alpha}), \psi^\epsilon_{n+1,\alpha}).
\end{equation}
\end{asynch_assumption}

\begin{asynch_assumption}\label{assumption:asynch_g_continuous}
$g^\epsilon_{n,\alpha}(\cdot, \xi)$ is continuous in $\theta$, uniformly in $n$, $\epsilon$ and in $\xi\in A$.
\end{asynch_assumption}

\begin{asynch_assumption}\label{assumption:asynch_u_continuous}
$u^\epsilon_{n,\alpha}(\cdot, \psi)$ is continuous in $\theta$, uniformly in $n$, $\epsilon$ and in $\psi\in A^+$.
\end{asynch_assumption}

\begin{asynch_assumption}\label{assumption:asynch_tightness}
The set $\{\xi^\epsilon_{n,\alpha}, \psi^\epsilon_{n,\alpha}; n,\alpha,\epsilon\}$ is tight.
\end{asynch_assumption}

\begin{asynch_assumption}\label{assumption:uniform_integrablity_g_u}
For each $\theta$
\begin{equation}
    \{g^\epsilon_{n,\alpha}(\theta, \xi_{n,\alpha}), u^\epsilon_{n,\alpha}(\theta, \psi^\epsilon_{n,\alpha}); \epsilon, n\}
\end{equation}
is uniformly integrable.
\end{asynch_assumption}

\begin{asynch_assumption}\label{assumption:asynch_converging_g}
There exists a continuous function $\bar{g}_\alpha(\cdot)$, such that for each $\theta \in H$, we have
\begin{equation*}
    \lim_{m,n,\epsilon} \frac{1}{m}\sum^{n+m+1}_{i=n} E^\epsilon_{n,\alpha}[g^\epsilon_{i,\alpha}(\theta, \xi^\epsilon_{i,\alpha}) - \bar{g}_\alpha(\theta)]I_{\{\xi^\epsilon_n\in A\}}=0
\end{equation*}
in probability, as $n$ and $m$ go to infinity and $\epsilon\rightarrow0$.
\end{asynch_assumption}

\begin{asynch_assumption}\label{assumption:asynch_converging_update_rate}
There are continuous, real-valued, and positive functions $\bar{u}_\alpha(\cdot)$, such that for each $\theta \in H$:
\begin{equation*}
    \lim_{m,n,\epsilon} \frac{1}{m}\sum^{n+m+1}_{i=n} E^{\epsilon,+}_{n,\alpha}[u^\epsilon_{i+1,\alpha}(\theta, \psi^\epsilon_{i+1,\alpha}) - \bar{u}_\alpha(\theta)]I_{\{\psi^\epsilon_n\in A^+\}}=0
\end{equation*}
in probability, as $n$ and $m$ go to infinity and $\epsilon\rightarrow0$.
\end{asynch_assumption}

\vspace{5mm}
\begin{theorem}[see Theorem 3.3 and 3.5 of Ch. 12 in ~\cite{kushner2003stochastic}]\label{thm:asynch_estimator} Assume {\it\textbf{A}~\ref{assumption:asynch_unifrom_integrability_of_Y} - \textbf{A}~\ref{assumption:asynch_converging_update_rate}} hold. Then
\begin{equation*}
    \{\theta^\epsilon_\alpha(\cdot), \tau^\epsilon_\alpha(\cdot), \hat{\theta}^\epsilon_\alpha(\cdot), N^\epsilon_\alpha(\cdot), \alpha \leq r\}
\end{equation*}
is tight in $D^{4r}[0,\infty)$. Let $\epsilon$ index a weakly convergent subsequence, whose weak sense limit we denote by
\begin{equation*}
    (\theta^\epsilon_\alpha(\cdot), \tau^\epsilon_\alpha(\cdot), \hat{\theta}^\epsilon_\alpha(\cdot), N^\epsilon_\alpha(\cdot), \alpha \leq r).
\end{equation*}
Then the limits are Lipschitz continuous with probability 1 and
\begin{equation}
    \theta_\alpha(t)=\hat{\theta}_\alpha(\tau_\alpha(t)), \quad \hat{\theta}_\alpha(N_\alpha(t)),
\end{equation}
\begin{equation}
    N_\alpha(\tau_\alpha(t))=t.
\end{equation}
Moreover,
\begin{equation*}
    \tau_\alpha(t)=\int^t_0 \bar{u}_\alpha(\hat{\theta}(\tau_\alpha(s))ds,
\end{equation*}
\begin{equation*}
    \dot{\theta}_\alpha(t) = \bar{g}_\alpha(\hat{\theta}(\tau_\alpha(t))) + z_\alpha(t),
\end{equation*}
\begin{equation}\label{eq:mean_ODE_asynch}
    \dot{\hat{\theta}}_\alpha = \frac{\bar{g}_\alpha(\hat{\theta})}{\bar{u}_\alpha(\hat{\theta})} + \hat{z}_\alpha, \quad \alpha = 1, \dots, r,
\end{equation}
where  $z_\alpha$ and $\hat{z}_\alpha$ serve the purpose of keeping the paths in the interval $[a_\alpha, b_\alpha]$. On large intervals $[0,T]$, and after a transient period, $\hat{\theta}^\epsilon(\cdot)$ spends nearly all of its time (the fraction going to 1 as $\epsilon \rightarrow 0$) in a small neighborhood of $L_H$.
\end{theorem}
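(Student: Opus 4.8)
The plan is to prove this by the ODE method for stochastic approximation, adapted to the asynchronous multi-clock setting of~\cite{kushner2003stochastic}. The argument splits into three movements: (1) establish tightness of the family of interpolated processes $\{\theta^\epsilon_\alpha(\cdot),\tau^\epsilon_\alpha(\cdot),\hat\theta^\epsilon_\alpha(\cdot),N^\epsilon_\alpha(\cdot)\}$ in the Skorohod space $D^{4r}[0,\infty)$; (2) characterize the weak-sense limit of any convergent subsequence as a solution of the claimed time-rescaled mean ODE~\eqref{eq:mean_ODE_asynch}; and (3) relate the limit set of that ODE to $L_H$. The organizing device is that each component $\alpha$ runs on its own local clock, so one works simultaneously with the iterate-index interpolation $\theta^\epsilon_\alpha(\cdot)$ and the real-time interpolation $\hat\theta^\epsilon_\alpha(\cdot)$, linked through the counting process by $N_\alpha(\tau_\alpha(t))=t$.

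First I would establish tightness. Writing each update as $\theta^\epsilon_{n+1,\alpha}=\theta^\epsilon_{n,\alpha}+\epsilon Y^\epsilon_{n,\alpha}+\epsilon Z^\epsilon_{n,\alpha}$, the increments of the interpolations over any fixed time window are sums of $\epsilon Y^\epsilon_{i,\alpha}$ together with reflection increments. Uniform integrability of $\{Y^\epsilon_{n,\alpha},\delta\tau^\epsilon_{n,\alpha}\}$ (\textbf{A}~\ref{assumption:asynch_unifrom_integrability_of_Y}) and of $\{g^\epsilon_{n,\alpha},u^\epsilon_{n,\alpha}\}$ (\textbf{A}~\ref{assumption:uniform_integrablity_g_u}) controls these increments, while tightness of the driving sequences $\{\xi^\epsilon_{n,\alpha},\psi^\epsilon_{n,\alpha}\}$ (\textbf{A}~\ref{assumption:asynch_tightness}) controls the exogenous randomness. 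The projection terms $Z^\epsilon$ are of bounded variation and so do not destroy tightness, and the standard tightness criterion of~\cite[Ch.~8]{kushner2003stochastic} then yields tightness of the whole vector.

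Next, by Prohorov's theorem I would extract a weakly convergent subsequence and identify its limit. Using \textbf{A}~\ref{assumption:asynch_E_Y}, I decompose the one-step conditional mean of the update as $g^\epsilon_{n,\alpha}+\beta^\epsilon_{n,\alpha}$; the martingale-difference part $Y^\epsilon_{n,\alpha}-E^\epsilon_{n,\alpha}Y^\epsilon_{n,\alpha}$ contributes an interpolated term whose quadratic variation is $O(\epsilon)$ and hence vanishes in the limit, the bias $\beta$ averages to zero by the zero-mean averaging assumption, and \textbf{A}~\ref{assumption:asynch_converging_g} replaces the empirical average of $g^\epsilon_{i,\alpha}(\theta,\cdot)$ by the deterministic $\bar g_\alpha(\theta)$; the continuity in $\theta$ uniform in $(n,\epsilon,\xi)$ from \textbf{A}~\ref{assumption:asynch_g_continuous} lets me freeze $\theta$ at the limit path over short windows. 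The identical treatment of the clock increments, via \textbf{A}~\ref{assumption:asynch_E_delta_tau}, \textbf{A}~\ref{assumption:asynch_u_continuous} and \textbf{A}~\ref{assumption:asynch_converging_update_rate}, yields $\tau_\alpha(t)=\int_0^t\bar u_\alpha(\hat\theta(\tau_\alpha(s)))\,ds$. Combining, the index-time limit obeys $\dot\theta_\alpha=\bar g_\alpha+z_\alpha$, and the time change $N_\alpha(\tau_\alpha(t))=t$ converts this to the real-time mean ODE~\eqref{eq:mean_ODE_asynch}; the factor $1/\bar u_\alpha$ appears precisely because faster-updating components accumulate more index-steps per unit real time. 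The reflection processes $z_\alpha,\hat z_\alpha$ are identified as the minimal terms keeping each path in $[a_\alpha,b_\alpha]$.

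I expect the main obstacle to be the bookkeeping of the two time scales: one must show that the random, component-dependent clocks $\tau^\epsilon_\alpha$ converge to strictly increasing Lipschitz limits, so that the inverse time change is well defined, and that the index-window averaging of \textbf{A}~\ref{assumption:asynch_converging_g} and \textbf{A}~\ref{assumption:asynch_converging_update_rate} can be transported consistently to real-time windows across all $r$ components simultaneously. This is exactly where the strict positivity of $\bar u_\alpha$ is essential. Once the mean ODE is in hand, the final assertion — that after a transient $\hat\theta^\epsilon(\cdot)$ spends asymptotically all of its time in a small neighborhood of $L_H$ — follows from the convergence of the ODE trajectories to their limit set $L_H$ combined with the established weak convergence, which is the standard concluding step of the ODE method.
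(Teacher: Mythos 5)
This theorem is not proven in the paper at all: as its bracketed attribution indicates, it is imported verbatim from Theorems 3.3 and 3.5 of Chapter 12 of Kushner and Yin, and the paper uses it strictly as a black box. The paper's actual work in Appendix B is the verification of the hypotheses {\it\textbf{A}~\ref{assumption:asynch_unifrom_integrability_of_Y}--\textbf{A}~\ref{assumption:asynch_converging_update_rate}} for the Q-function estimator, after which this theorem is invoked to obtain the mean ODE. So your sketch can only be compared with the proof in the cited reference, and judged against that it is a faithful outline of the weak-convergence ODE method used there: tightness of the four interpolated families in $D^{4r}[0,\infty)$, extraction of a weakly convergent subsequence via Prohorov, identification of the limit through the averaging conditions {\it\textbf{A}~\ref{assumption:asynch_E_beta=0}}, {\it\textbf{A}~\ref{assumption:asynch_converging_g}}, {\it\textbf{A}~\ref{assumption:asynch_converging_update_rate}}, and the time change $N_\alpha(\tau_\alpha(t))=t$ producing the factor $1/\bar{u}_\alpha(\hat{\theta})$ in~\eqref{eq:mean_ODE_asynch}. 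You assign each assumption to its correct role and rightly flag strict positivity of $\bar{u}_\alpha$ as what makes the inverse time change well defined.

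Two steps in your sketch would need repair before it could stand as a complete proof. First, the claim that the martingale-difference part vanishes because its ``quadratic variation is $O(\epsilon)$'' tacitly assumes uniformly bounded second moments, which {\it\textbf{A}~\ref{assumption:asynch_unifrom_integrability_of_Y}} does not provide (it gives only uniform integrability); the reference handles this with a truncation device and by characterizing the limit through a martingale problem, which is exactly what allows the weak-convergence method to run under uniform integrability alone. Second, the closing assertion---that $\hat{\theta}^\epsilon(\cdot)$ spends a fraction of time tending to one near $L_H$---does not follow merely from ``ODE trajectories converge to their limit set''; it requires the shifted-interpolation argument behind Theorem 3.5 (analyzing $\theta^\epsilon(t_\epsilon+\cdot)$ with $t_\epsilon\to\infty$ together with compactness of the set of limit paths) to exclude excursions away from $L_H$ occupying a nonvanishing time fraction. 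A final caution: the theorem as stated concerns the constant-step-size family $\epsilon\to 0$, whereas the paper's algorithm uses decreasing steps $\epsilon_n$; the paper bridges this gap not inside the proof but by the remark following the theorem, which appeals to Theorem 4.1 of Chapter 12 of the same reference.
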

\begin{remark}
For decreasing step size, e.g. $\epsilon_n = 1/n^a, a\in(0,1]$, Theorem 4.1 of Ch. 12 in ~\cite{kushner2003stochastic} state that the same results in Theorem 3.5 of Ch. 12 in ~\cite{kushner2003stochastic} holds under the same assumptions (see the comment on the step-size sequence in~\cite[p.426]{kushner2003stochastic}). 
\end{remark}

\subsubsection{Convergence of the Q estimation using  stochastic approximation}
Next we state the main result of this work: the convergence of the Q estimation using state prediction. The recursive estimator of $Q^*(s,a)$, defined in the previous section, is written in the following stochastic approximation form~\cite{kushner2003stochastic}:
\begin{equation}\label{eq:HMM_based_Q_learning}
    Q_{{n+1},\alpha} = \Pi_{B_Q} \left[ Q_{n,\alpha} + \epsilon_n Y_{n, \alpha} \right],
\end{equation}
where $\alpha$ denotes indices of the parameter of $Q$, to be updated, and depends on the current action $a_n$,
{\small
\begin{equation}\label{eq:G_HMM}
\begin{aligned}
&Y_{n, \alpha}
=
G_\alpha(Q_n, \xi_n)
=
\\
&        \begin{bmatrix}
            \sum_{j}^I\hat{p}_n(1,j)\left(r_n + \gamma \max_{a'}q_{n}(j,a') - q_{n}(1, a_n) \right) \\
            \sum_{j}^I\hat{p}_n(2,j)\left(r_n + \gamma \max_{a'}q_{n}(j,a') - q_{n}(2, a_n) \right) \\
            \vdots \\
            \sum_{j}^I\hat{p}_n(I,j)\left(r_n + \gamma \max_{a'}q_{n}(j,a') - q_{n}(I, a_n) \right)
        \end{bmatrix},
\end{aligned}
\end{equation}
}
while $\xi_n$ denotes the estimated state transitions $\hat{p}(i, j)$ for all $i,j \in \mathcal{S}$ calculated in~\eqref{eq:state_transition_est}.
Now we verify {\it\textbf{A}~\ref{assumption:asynch_unifrom_integrability_of_Y} - \textbf{A}~\ref{assumption:asynch_converging_update_rate}} for the Q-function estimator in~\eqref{eq: Q_estimator_with_HMM}.

For {\it\textbf{A}~\ref{assumption:asynch_unifrom_integrability_of_Y}}, we need to show that $Y^\epsilon_{n,\alpha}=G_\alpha(q_n,\xi_n)$ in~\eqref{eq:G_HMM} is uniformly integrable. Most terms in $G_\alpha(\cdot)$ are bounded, $\hat{p}(i,j)\in[0,1]$, $q_n(s,a)$ is bounded due to the projection $\Pi_{B_Q}$, $r_n$ is the sample of $R(s_n, a_n)= r(s_n,a_n) + \delta$, where $\delta$ is i.i.d. normal distributed random variable as defined in the POMDP model. Due to the normal distribution and the bounded $q_n(\cdot)$  \& $\hat{p}(\cdot)$, we know that $P{|Y_n| < \infty}=1$. Hence, $Y_n$ is uniformly integrable, i.e. $\lim_{K\rightarrow\infty }\sup_n E|Y_n|I_{\{|Y_n|\geq K\}}=0$. The we need to show that $\delta\tau^\epsilon_{n,\alpha}$ is uniformly integrable. According to Assumption\ref{assumption:nice_behavioral_policy}, the probability of not choosing an action for infinitely long is zero. So $\delta\tau^\epsilon_{n,\alpha}$ is uniformly integrable, i.e. $\lim_{K\rightarrow\infty }\sup_n E|\delta\tau^\epsilon_{n,\alpha}|I_{\{|\delta\tau^\epsilon_{n,\alpha}|\geq K\}}=0$. Hence, {\it\textbf{A}\ref{assumption:asynch_unifrom_integrability_of_Y}} holds.

For {\it\textbf{A}~\ref{assumption:asynch_E_Y}}, write $E^\epsilon_{n,\alpha}Y^\epsilon_{n,\alpha} = g^\epsilon_{n,\alpha}(\hat{\theta}^\epsilon(\tau^{\epsilon,-}_{n+1,\alpha}), \xi^\epsilon_{n,\alpha})+\beta^\epsilon_{n,\alpha}$ with the Q-function estimator in~\eqref{eq:HMM_based_Q_learning} as
{\small
\begin{equation}
\begin{aligned}\label{eq:E_Y}
    &E^\epsilon_{n,\alpha}Y^\epsilon_{n,\alpha}
    \\
    &=
    \begin{bmatrix}
    \sum_{j}^I\hat{p}_n(1,j)\left(r_n + \gamma \max_{a'}q_{n}(j,a') - q_{n}(1, a_n) \right) \\
    \vdots \\
    \sum_{j}^I\hat{p}_n(I,j)\left(r_n + \gamma \max_{a'}q_{n}(j,a') - q_{n}(I, a_n) \right)
    \end{bmatrix}
    \\
    &=g^\epsilon_{n,\alpha}(\hat{\theta}^\epsilon(\tau^{\epsilon,-}_{n+1,\alpha}), \xi^\epsilon_{n,\alpha})+0,
\end{aligned}
\end{equation}
}
where $\xi^\epsilon_{n,\alpha}=(r_n, a_n, (\hat{p}_n(i,j)))$ and $\theta^\epsilon$ corresponds to $q(i,a)$. From the above equation, it is easy to see that $g^\epsilon_{n,\alpha}(\cdot)$ is real valued continuous function, and $\beta^\epsilon_{n,\alpha}=0$, so it is trivially uniformly integrable.

{\it\textbf{A}~\ref{assumption:asynch_E_beta=0}} is trivially satisfied, since $\beta^\epsilon_{n,\alpha}=0$.

For {\it\textbf{A}~\ref{assumption:asynch_E_delta_tau}}, we verify it using  \textbf{Assumption}~\ref{assumption:nice_behavioral_policy} on the behavioral policy. We use the same argument from~\cite[p.440]{krishnamurthy2002recursive}. Let $\{\psi^\epsilon_n\}$ denote the sequence of observation, which is used to generate actions by the behavior policy in \textbf{Assumption}~\ref{assumption:nice_behavioral_policy}. According to the assumption, the probability that an arbitrary chosen action can be strictly positive can be verified as follows. Suppose that there are $n_0<\infty$ and $\delta_0>0$, such that for each state pair $i,j$ we have:
\begin{equation}\label{eq:positive_policiy}
    \inf P\{\psi^\epsilon_{n+k}=j,\; \text{for some}\; k\leq n_0|\psi^\epsilon_n=i \}\geq\delta_0.
\end{equation}
Define $u^\epsilon_{n+1,\alpha}$ by
\begin{equation*}
    E^{\epsilon,+}_{n,\alpha} \delta\tau^\epsilon_{n+1,\alpha} = u^\epsilon_{n+1,\alpha},
\end{equation*}
and recall that $\delta \tau^\epsilon_{n, \alpha}$ denotes the time interval between the $n^{\rm th}$ and $(n+1)^{\rm th}$ occurrences of the action index $\alpha$. Then \eqref{eq:positive_policiy} implies that $\{\delta\tau_{n,\alpha}\}$ are uniformly bounded (but greater than 1), i.e. the expected recurrence time of each action index is finite.

Verifying {\it\textbf{A}~\ref{assumption:asynch_g_continuous}} easily follows from \eqref{eq:E_Y}. The the function in \eqref{eq:E_Y}  consists of basic operations such as addition, multiplication and $\max$ operator, which guarantee continuity of the function.

Verification of {\it\textbf{A}~\ref{assumption:asynch_u_continuous}} also follows trivially due to the fact that the behavior policy and the state transition do not depend on $\theta$, which is $q(s,a)$, since it is off-policy learning.

For {\it\textbf{A}~\ref{assumption:asynch_tightness}}, we state the definition of \emph{tightness}.
\begin{definition}[tightness of a set of random variables] Let $B$ be a metric space. Let $\mathcal{B}$ denote the minimal $\sigma$-algebra induced on $B$ by the topology generated by the metric. Let $\{A_n, n<\infty\}$ and $A$ be $B$-valued random variables defined on a probability space $(\Omega,P,\mathcal{F})$. A set $\{A_n\}$ of random variables with values in $B$ is said to be \emph{tight}, if for each $\delta>0$ there is a compact set $B_\delta \subset \mathcal{B}$, such that
\begin{equation}
    \sup_n P\{A_n \notin B_\delta\} \leq \delta.
\end{equation}
\end{definition}
Notice that
\begin{equation*}
\begin{aligned}
    \xi^\epsilon_{n,\alpha}&=\xi^\epsilon_n=(r_n, a_n, (\hat{p}_n(i,j))), \\
    \psi^\epsilon_{n,\alpha}&=\psi^\epsilon_n = o_n,
\end{aligned}
\end{equation*}
where $a_n, o_n, \hat{p}_n(\cdot)$ are bounded, and $r_n$ is the sum of bounded $r(s,a)$ and i.i.d. Gaussian noise.
Hence, the tightness (boundedness in probability) of $\{\xi^\epsilon_{n,\alpha}, \psi^\epsilon_{n,\alpha}; n,\alpha,\epsilon\}$ is straightforwardly verified.

We have checked the boundedness of $\{g^\epsilon_{n,\alpha}(\cdot)\}, \{u^\epsilon_{n,\alpha}(\cdot)\}$, when we verified {\it\textbf{A}~\ref{assumption:asynch_g_continuous}} and {\it\textbf{A}~\ref{assumption:asynch_u_continuous}} above. So uniform integrability in  {\it\textbf{A}~\ref{assumption:uniform_integrablity_g_u}} is verified.

When we verified {\textbf{C}~\ref{assumption: geometrically_ergodic_extended_MC}}, the geometric ergodicity of the extended Markov chain $\{ s_n, y_n, \hat{p}_n, \omega_n \}$ was proven. Due to the ergodicity, both $\xi^\epsilon_{n,\alpha}$ and $\psi^\epsilon_{n,\alpha}$ converge to the stationary distribution. Hence, {\it\textbf{A}~\ref{assumption:asynch_converging_g}} and {\it\textbf{A}~\ref{assumption:asynch_converging_update_rate}} hold.

Now, we have verified {\it\textbf{A}~\ref{assumption:asynch_unifrom_integrability_of_Y} - \textbf{A}~\ref{assumption:asynch_converging_update_rate}} in Theorem~\ref{thm:asynch_estimator}. Accordingly, the iterate of the estimator converges to the set of the limit points of the ODE in~\eqref{eq:mean_ODE_asynch}, and $q_n(s,a)$ converges to the solution of the following ODE:
{\small
\begin{equation*}
        \begin{bmatrix}
            \dot{q}_{1,a}\\
            \dot{q}_{2,a}\\
            \vdots     \\
            \dot{q}_{I,a}
        \end{bmatrix}
        =
        \frac{1}{\bar{u}_a}
        \begin{bmatrix}
        \sum_{j}^I\bar{p}(1,j) (\bar{r} + \gamma \max_{a'}q_{j,a'} - q_{1, a}) \\
        \sum_{j}^I\bar{p}(2,j) (\bar{r} + \gamma \max_{a'}q_{j,a'} - q_{2, a}) \\
        \vdots \\
        \sum_{j}^I\bar{p}(I,j)  (\bar{r} + \gamma \max_{a'}q_{j,a'} - q_{I, a})
        \end{bmatrix}
        +  \hat{z}_a.
\end{equation*}
}
We first ignore $\hat{z}_a$ and define the operator $F(Q) = [F_{i,a}(Q)]_{i,a}$ with
\begin{equation*}
    F_{i,a}(Q) = \sum_{j}^I\frac{\bar{p}(i,j)}{\sum_{k}^I\bar{p}(i,k)} (\bar{r} + \gamma \max_{a'}(q_{j,a'})),
\end{equation*}
where $Q= [q_{i,a}]_{i,a} = \begin{bmatrix}
    \ddots & {} &    \\
   {} & {q_{i,a} } & {}  \\
      & {} &  \ddots   \\
\end{bmatrix}$, and
\begin{equation*}
\Theta_{i,a} : = \frac{\sum_k^N {\bar p(i,k)}}{\bar u_a}.
\end{equation*}
Then, the ODE is expressed as $\dot Q =  \Theta  \circ(F(Q) - Q)$, where $\circ$ is the Hadamard product. Using the standard proof for the Q-learning convergence~\cite{bertsekas1996neuro}, we can easily prove that $F$ is a contraction in the max-norm $\|\cdot \|_\infty$. If we consider the ODE $\dot Q =  F(Q) -Q$, the global asymptotic stability of the unique equilibrium point is guaranteed by the results in~\cite{borkar1997analog}. Returning to the original ODE $\dot Q =  \Theta\circ (F(Q) - Q)$, we can analyze its stability in a similar way. Define the weighted max-norm $\| A \|_{\Theta^{-1},\infty}:= \max_{i,j} \Theta _{ij}^{-1} A_{ij} $ for a matrix $A$. Then, $\Theta  \circ F$ is a contraction with respect to the norm $\| A \|_{\Theta^{-1},\infty}$. Using this property, we can follow similar arguments of the proof of~\cite[Theorem 3.1]{borkar1997analog} to prove that the  unique fixed point $Q^*$ of $F(Q^*)= Q^*$ is a globally asymptotically stable equilibrium point of the ODE $\dot Q =  \Theta  \circ(F(Q) - Q)$.
\QEDA
\end{document}